\newcites{app}{References}
\def\BState{\State\hskip-\ALG@thistlm}
\newcommand{\eq}[1]{(\ref{#1})}
\newcommand{\inner}[2]{\left\langle #1,#2 \right\rangle}
\newcommand{\rbr}[1]{\left(#1\right)}
\newcommand{\RR}{\mathbb{R}}
\newcommand{\NN}{\mathbb{N}}
\newcommand{\one}{\mathbf{1}}
\newcommand{\vct}{\boldsymbol }
\newcommand{\R}{\mathbb{R}}
\newtheorem{theorem}{Theorem}
\newtheorem{definition}{Definition}
\def\P{\mathbb{P}}
\def\R{\mathbb{R}}
\def\cA{\mathcal{A}}
\def\cN{\mathcal{N}}
\def\cX{\mathcal{X}}
\begin{document}
%

\title{Fast Differentially Private Matrix Factorization}
%
%
%
%
%

\numberofauthors{3} 
%
\author{
%
%
\alignauthor
Ziqi Liu\\
       \affaddr{MOEKLINNS Lab, Computer Science}\\
       \affaddr{Xi'an Jiaotong University}\\
       \affaddr{Xi'an, China}\\
       \email{ziqilau@gmail.com}
\alignauthor
Yu-Xiang Wang\\
       \affaddr{Machine Learning Department}\\
       \affaddr{Carnegie Mellon University}\\
       \affaddr{Pittsburgh, PA}\\
       \email{yuxiangw@cs.cmu.edu}
\alignauthor Alexander J. Smola\\
       \affaddr{CMU Machine Learning}\\
       \affaddr{Marianas Labs Inc.}\\
       \affaddr{Pittsburgh, PA}\\
       \email{alex@smola.org}
}

\maketitle
\begin{abstract}
  Differentially private collaborative filtering is a challenging
  task, both in terms of accuracy and speed. We present a simple
  algorithm that is provably differentially private, while offering
  good performance, using a novel connection of differential privacy
  to Bayesian posterior sampling via Stochastic Gradient Langevin
  Dynamics.  Due to its simplicity the algorithm lends itself to
  efficient implementation. By careful systems design and by
  exploiting the power law behavior of the data to maximize CPU cache
  bandwidth we are able to generate 1024 dimensional models at a rate
  of 8.5 million recommendations per second on a single PC.
\end{abstract}

\keywords{Differential Privacy, Collaborative Filtering; Scalable
  Matrix Factorization}

\section{Introduction}

Privacy protection in recommender systems is a notoriously
challenging problem. 
There are often two
competing goals at stake: similar users are likely to prefer similar
products, movies, or locations, hence sharing of preferences between
users is desirable. Yet, at the same time, this exacerbates the type
of privacy sensitive queries, simply since we are now not looking for
aggregate properties from a dataset (such as a classifier) but for
properties and behavior of other users `just like' this specific
user. Such highly individualized behavioral patterns are shown to facilitate provably effective user de-anonymization \citep{narayanan2008robust,zhang2012guess}.

Consider the case of a couple, both
using the same location recommendation service. Since both spouses
share much of the same location history, it is likely that they will
receive similar recommendations, based on other users' preferences
similar to theirs. In this context sharing of information is
desirable, as it improves overall recommendation quality.

Moreover, since their location history is likely to be very similar,
each of them will also receive recommendations to visit the place that
their spouse visited (e.g.\ including places of ill repute),
regardless of whether the latter would like to share this information
or not. This creates considerable tension in trying to satisfy those
two conflicting goals.

Differential privacy offers tools to overcome these problems. Loosely
speaking, it offers the participants \emph{plausible deniability} in terms of
the estimate. That is, it provides guarantees that the recommendation
would also have been issued with sufficiently high probability if
another specific participant had not taken this action before. This is
precisely the type of guarantee suitable to allay the concerns in the
above situation \citep{Dwork2013}.

Recent work, e.g.\ by \citet{Mcsherry2009} has focused on designing
\emph{custom built} tools for differential private
recommendation. Many of the design decisions in this context are
hand engineered, and it is nontrivial to separate the choices
made to obtain a differentially private system from those made to
obtain a system that works well. Furthermore, none of these systems
\cite{Mcsherry2009, Xin2014} lead to very fast implementations.

In this paper we show that a large family of recommender systems, namely
those using matrix factorization, are well suited to differential
privacy. More specifically, we exploit the fact that sampling from the
posterior distribution of a Bayesian model, e.g.\ via Stochastic
Gradient Langevin Dynamics (SGLD) \citep{Welling2011}, can lead to estimates
that are sufficiently differentially private \citep{Wang2015}. At the same time, their
stochastic nature makes them well amenable to efficient
implementation. Their generality means that we \emph{need not custom-design
a statistical model for differential privacy} but rather that is
possible to \emph{retrofit an existing model} to satisfy these
constraints. The practical importance of this fact cannot be
overstated --- it means that no costly re-engineering of deployed
statistical models is needed. Instead, one can simply reuse the existing inference algorithm with a trivial modification to obtain a
differentially private model.

This leaves the issue to performance. Some of the best reported results
are those using GraphChi \cite{Kyrola2012}, which show that state-of-the-art recommender systems can be built using just a single PC within a
matter of hours, rather than requiring hundreds of computers. In this paper, we show
that by efficiently exploiting the power law properties inherent in the data (e.g.\
most movies are hardly ever reviewed on Netflix), one
can obtain models that achieve peak numerical performance for
recommendation. More to the point, they are 3 times faster than
GraphChi on identical hardware.

In summary, this paper describes the by far the fastest matrix factorization based recommender system and it can be made differentially privately using SGLD without losing performance.
Most competing approaches excel at no more than one of those aspects. Specifically,
\begin{enumerate}[leftmargin=0.4cm,topsep=0pt,itemsep=-1ex,partopsep=1ex,parsep=1ex]
\item It is efficient at the state of the art relative to other
matrix factorization systems.
\begin{itemize}[leftmargin=0.2cm,topsep=0pt]
\item We develop a cache efficient matrix factorization framework for general SGD updates.
\item We develop a fast SGLD sampling algorithm with bookkeeping to avoid adding the Gaussian noise to the whole parameter space at each updates while still maintaining the correctness of the algorithm.
\end{itemize}
\item And it is differentially private.
\begin{itemize}[leftmargin=0.2cm,topsep=0pt]
\item We show that sampling from a scaled posterior distribution for matrix factorization system can guarantee user-level differential privacy.
\item We present a personalized differentially private method for calibrating each user's privacy and accuracy.
\item We only privately release $V$ to public, and design a local recommender system for each user.
\end{itemize}
\end{enumerate}
Experiments confirm that the algorithm can be implemented with high
efficiency, while
offering very favorable privacy-accuracy tradeoff that nearly matches systems without
differential privacy at meaningful privacy level.
\section{Background}
\label{sec:background}

We begin with an overview of the relevant ingredients, namely
collaborative filtering using matrix factorization, differential
privacy and a primer in computer architecture. All three are relevant
to the understanding of our approach. In particular, some basic
understanding of the cache hierarchy in microprocessors is useful for
efficient implementations.

\subsection{Collaborative Filtering}

In collaborative filtering we assume that we have a set of
$\mathcal{U}$ users, rating $\mathcal{V}$ items. We only observe a
small number of entries $r_{ij}$ in the rating matrix $R$. Here
$r_{ij}$ means that user $i$ rated item $j$. A popular tool
\cite{Koren2009} to deal with inferring entries in
$R \in \RR^{|\mathcal{U}| \times |\mathcal{V}|}$ is to approximate $R$
by a low rank factorization, i.e.\
\begin{align}
  R \approx U V^\top
  \text{ where } U \in \RR^{|\mathcal{U}| k}
  \text{ and } V \in \RR^{|\mathcal{V}| k}
\end{align}
for some $k \in \NN$, which denotes the dimensionality of the feature
space corresponding to each item and movie. In other words,
(user,item) interactions are modeled via
\begin{align}
  r_{ij} \approx \inner{u_i}{v_j} + b_i^u + b_j^m + b_0.
\end{align}
Here $u_i$ and $v_j$ denote row-vectors of $U$ and $V$ respectively,
and $b_i^u$ and $b_j^m$ are scalar offsets responsible for a specific
user or movie respectively. Finally, $b_0$ is a common bias.

A popular interpretation is that for a given item $j$, the elements of
$v_j$ measure the extent to which the item possesses those
attributes. For a given user $i$ the elements of $u_i$ measure the extent
of interest that the user has in items that score highly in the corresponding
factors. Due to the conditions proposed in the Netflix contest, it is
common to aim to minimize the mean squared error of deviations between
true ratings and estimates. To address overfitting, a norm penalty is
commonly imposed on $U$ and $V$. This yields the following
optimization problem
\begin{align}
\nonumber
\min_{u,v} \sum_{{i,j} \in R} (r_{ij} - \inner{u_i}{v_j} - b_i^u -
  b_j^m - b_0)^2+ \lambda(||U||_2^2 + ||V||_2^2)
\end{align}
A large number of extensions have been proposed for this model. For
instance, incorporating co-rating information
\cite{Salakhutdinov2007}, neighborhoods, or temporal dynamics
\cite{Koren2010} can lead to improved performance. Since we are
primarily interested in demonstrating the efficacy of differential
privacy and the interaction with efficient systems design, we focus on
the simple inner-product model with bias.

\noindent
{\bfseries Bayesian View.}
Note that the above optimization problem can be viewed as an
instance of a Maximum-a-Posteriori estimation problem. That is, one
minimizes
\begin{align}
  \nonumber
  -\log p(U,V|R, \lambda_r, \Lambda_u, \Lambda_v) = -\log \cN(R|\inner{U}{V}, \lambda_r^{-1})\\\nonumber
  -\log \cN(U|0, \Lambda_u^{-1}) -\log \cN(V|0, \Lambda_v^{-1})
\end{align}
where, up to a constant offset
$$-\log p(r_{ij}|u_i, v_j) = \lambda_r (r_{ij} - \inner{u_i}{v_j} - b_i^u -
  b_j^m - b_0)^2$$
and $-\log p(U) = U \Lambda_u U^\top$ and likewise for $V$. In other
words, we assume that the ratings are conditionally normal, given the
inner product $\inner{u_i}{v_j}$, and the factors
$u_i$ and $v_j$ are drawn from a normal distribution. Moreover, one can also
introduce priors for $\lambda_r, \Lambda_u, \Lambda_v$ with a Gamma distribution $\mathcal{G}(\cdot|\alpha, \beta)$.

While this setting is typically just treated as an afterthought of
penalized risk minimization, we will explicitly use this when
designing differentially private algorithms. The rationale for this is
the deep connection between samples from the posterior and
differentially private estimates. We will return to this aspect after
introducing Stochastic Gradient Langevin Dynamics.

\noindent
{\bfseries Stochastic Gradient Descent.}
Minimizing the regularized collaborative filtering objective is
typically achieved by one of two strategies: Alternating Least Squares
(ALS) and stochastic gradient descent (SGD). The advantage of the
former is that the problem is biconvex in $U$ and $V$ respectively,
hence minimizing $U|V$ or $V|U$ are convex. On the other hand, SGD is
typically faster to converge and it also affords much better cache
locality properties. Instead of accessing e.g.\ all reviews for a
given user (or all reviews for a given movie) at once, we only need to
read the appropriate tuples. In SGD each time we update a randomly
chosen rating record by:
\begin{align}
u_i & \gets (1-\eta_t \lambda) u_i + \eta_t v_j
      \rbr{r_{ij}-\inner{u_i}{v_j} - b_i^u - b_j^m - b_0} \nonumber\\
v_j & \gets (1-\eta_t \lambda) v_j + \eta_t u_i
      \rbr{r_{ij}-\inner{u_i}{v_j} - b_i^u - b_j^m - b_0}
\label{eq:sgd}
\end{align}
One problem of SGD is that trivially parallelizing the procedure
requires memory locking and synchronization for each rating, which
could significantly hamper the performance. \cite{Niu2011} shows that
a lock-free scheme can achieve nearly optimal solution when the data
access is sparse. We build on this \emph{statistical} property to
obtain a \emph{fast system} which is suitable for differential
privacy.

\subsection{Differential Privacy}
\label{sec:diffp}

Differential
privacy (DP)~\citep{dwork2006differential,dwork2006calibrating} aims to
provide means to cryptographically protect personal information in the
database, while allowing aggregate-level information to be accurately
extracted. In our context this means that we protect user-specific
sensitive information while using aggregate information to benefit all
users.


Assume the actions of a statistical database are modeled via a randomized algorithm $\mathcal{A}$. Let the space of data be $\mathcal{X}$ and data sets $X, Y \in \mathcal{X}^n$. Define $d(X, Y)$ to be the edit distance or Hamming distance between data set $X$ and $Y$, for instance if $X$ and $Y$ are the same except one data point then we have $d(X, Y) = 1$.
\begin{definition}[Differential Privacy]
\label{def:diffp}
We call a randomized algorithm $\mathcal{A}$ $(\epsilon, \delta)$-differentially private if for all measurable sets $S \subset \text{Range}(\mathcal{A})$ and for all $X,X'\in \cX^n$ such that the hamming distance $d(X,X')=1$,
\begin{align}
\P(\mathcal{A}(X) \in S) \le \mathrm{exp}(\epsilon) \P(\mathcal{A}(X') \in S) + \delta \nonumber
\end{align}
If $\delta = 0$ we say that $\mathcal{A}$ is $\epsilon$-differential private.
\end{definition}
The definition states that if we arbitrarily replace any individual
data point in a database, the output of the algorithm doesn't change
much. The parameter $\epsilon$ in the definition controls the maximum
amount of information gain about an individual person in the database
given the output of the algorithm. When $\epsilon$ is small, it
prevents any forms of linkage attack to individual data record (e.g.,
linkage of Netflix data to IMDB data \citep{narayanan2008robust}). We
refer readers to~\cite{Dwork2013} for detailed interpretations of the
differential privacy in statistical testing, Bayesian inference and
information theory.

An interesting side-effect of this definition in the context of
collaborative filtering is that it also limits the influence of so-called
whales, i.e.\ of users who submit extremely large numbers of
reviews. Their influence is also curtailed, at least under the
assumption of an equal level of differential privacy per user. In
other words, differential privacy confers robustness for collaborative
filtering.

\citet{Wang2015} show that posterior sampling with bounded
log-likelihood is essentially exponential mechanism
\citep{mcsherry2007mechanism} therefore protecting differential
privacy for free (similar observations were made independently in \citep{mir2013differential,dimitrakakis2014robust}). \citet{Wang2015} also suggests a recent line of
works~\cite{Welling2011,Chen2014, Ding2014} that use stochastic
gradient descent for Hybrid Monte Carlo sampling essentially preserve
differential privacy with the same algorithmic procedure. The
consequence for our application is very interesting: if we trust that
the MCMC sampler has converged, i.e.\ if we get a sample that is approximately
drawn from the posterior distribution, then we can use one sample
as the private release. If not, we can calibrate the MCMC procedure
itself to provide differential privacy (typically at the cost of
getting a much poorer solution).

\subsection{Computer Architecture}


A key difference between generic numerical linear algebra, as commonly
used e.g.\ for deep networks or generalized linear models, and the
methods used for recommender systems is the fact that the access
properties regarding users and items are highly nonuniform. This is a
significant advantage, since it allows us to exploit the caching
hierarchy of modern CPUs to benefit from higher bandwidth than what
disks or main memory access would permit.

\begin{table}[tbh]
\begin{small}
\begin{tabular}{|l|r|r|r|}
\hline
Device & Capacity & Bandwidth read & Bandwidth write \\ \hline
Hard Disk & 3TB & 150MB/s & 100MB/s \\
SSD & 256GB & 500MB/s & 350MB/s \\
RAM & 16GB & 14GB/s & 9GB/s \\
L3 Cache & 6MB & 16-44GB/s & 7-30GB/s\\
L1 Cache & 32KB & 74-135GB/s & 44-80GB/s\\\hline
\end{tabular}
\end{small}
\caption{\label{tb:benchmark} Performance (single threaded) on a
  Macbook Pro (2011) using an Intel Core i7 operating at
  2.0 GHz and 160MT/s transfer rate and 2 memory banks. The spread in
  L1 and L3 bandwidth is due to different packet sizes.}
\end{table}

A typical computer architecture consists of a hard disk, solid-state
drive (SSD), random-access memory (RAM) and CPU cache. Many factors
affect the real available bandwidth, such as read and write patterns,
block sizes, etc. 
We measured this for a desktop computer. See
Table~\ref{tb:benchmark} for a quick overview.
A good algorithm design should be pushing the data flow to CPU cache
level and \emph{hide the latency} from SSD or even RAM and amplify the
available bandwidth.



The key strategy in obtaining high throughput collaborative filtering
systems is to obtain peak bandwidth on \emph{each} of the subsystems
by efficient caching. That is, if a movie is frequently reused, it is
desirable to retain it in the CPU cache. This way, we will neither
suffer the high latency (100ns per request) of a random read from
memory, nor will we have to pay for the comparably slower bandwidth of
RAM relative to the CPU cache. 
This intuition is confirmed in the
observed cache miss rates reported in the experiments in
Section~\ref{sec:experiments}.



\section{Differentially Private \\ Matrix
  Factorization}\label{sec:dpmf}

We start by describing the key ideas and algorithmic framework for
differentially private matrix factorization. The method, which
involves preprocessing data and then sampling from a scaled posterior
distribution, is provably differentially private and has profound
statistical implications. Then we will describe a specific Monte Carlo
sampling algorithm: Stochastic Gradient Langevin Dynamics (SGLD) and justify
its use in our setting. We then come up with a novel way to
personalize the privacy protection for individual users. Finally, we
discuss how to develop fast cache-efficient solvers to exploit
bandwidth-limited hardware such that it can be used for general
SGD-style algorithms.

Our differential privacy mechanism relies on a recent observation that
posterior sampling preserves differential privacy, provided that the
log-likelihood of each user is uniformly
bounded~\citep{Wang2015}. This simple yet remarkable result suggests
that sampling from posterior distribution is differentially private
for free to some extent. In our context, the claim is that, if\footnote{For convenience of notation we will omit the biases from the description
below in favor of a slightly more succinct notation.}
$\max_{U,V,R,i}\sum_{{j} \in R_i} (r_{ij} - \langle u_i, v_j \rangle )^2 \leq B$
then the method that outputs a sample from
\begin{align*}
  P(U,V) \propto \exp\rbr{-\hspace{-3mm}\sum_{(i,j) \in R} (r_{ij} - \inner{u_i}{v_j})^2
  + \lambda (\|U\|_F^2 + \|V\|_F^2)}
\end{align*}
preserves $4B$-differential privacy. Moreover, when we want to set the
privacy loss $\epsilon$ to another number, we can easily do this by
simply rescaling the entire expression by $\epsilon/4B$.

The question now is whether
$\underset{U,V,R,i}{\max}\sum_{{j} \in R_i} (r_{ij} - \langle u_i,
v_j \rangle )^2$
is bounded. Since the ratings are bounded between $1\leq r_{ij}\leq 5$ and we can consider a reasonable sublevel set $\{U,V \mid \max_{i,j}{|u_i^Tv_j|}\leq \kappa\}$, we have every summand to be bounded by $(5+\kappa)^2$.
This does not affect the privacy claim as long as $\kappa$ is chosen independent to the data.

$B$ could still be large, if some particular users rated many
movies. This issue is inevitable even if all observed users have
few ratings, since differential privacy also protects users not in
the database. We propose two theoretically-inspired algorithmic
solutions to this problem:
\begin{description}
\item[Trimming:] We may randomly delete ratings for those who rated a lot of movies so that the maximum number of ratings from a single user $\tau$ will not be too much larger than the average number of ratings. This procedure is the underlying gem that allows OptSpace (the very first provable matrix factorization based low-rank matrix completion method) \citep{keshavan2009matrix} to work.
\item[Reweighting:] Alternatively, one can weight each user appropriately so that those who rated many movies will have smaller weight for each rating. \citet{Mcsherry2009} used this reweighting scheme for controlling privacy loss. A similar approach is considered in the study of non-uniform and power-law matrix completion~\citep{meka2009matrix,srebro2010collaborative}, where the weighted trace norm has the same effect as if we reweight the loss-functions.
\end{description}
In addition, these procedures have their practical benefits for the
robustness of the recommendation system, since they prevent any
malicious user from injecting too much impact into the system, see
e.g., \citet{wang2012stability,mobasher2007toward}.  Another
justification of these two procedures is that, if the fully observed
matrix is truly in a low-dimensional subspace, neither of these two
procedures changes the underlying subspace. Therefore, the solutions
should be similar to the non-preprocessed version.

The procedure for differentially private matrix factorization
(DPMF) is summarized in Algorithm~\ref{alg:DPMF}. Note that this is a
\emph{conceptual} sketch (we will discuss an efficient variant thereof
later).
The following theorem guarantees that our procedure is indeed differentially private.
\begin{algorithm}[tbh]
  \caption{Differentially Private Matrix Factorization }\label{alg:DPMF}
  \begin{algorithmic}[1]
  \Require{Partially observed rating matrix $R\in \R^{m\times n}$ with observation mask $\Omega$. $m=\#$ of movies, $n=\#$ of users. Privacy parameter $\epsilon$, a predefined positive parameter $\kappa$ such that $\{U,V \mid  u_i^Tv_j \in [1-\kappa,5+\kappa] \;\forall i,j\}$, rating range $[1,5]$, max allowable number of ratings per-user $\tau$, number of ratings of each user $\{m_1,...,m_n\}$, weight of each user $w$, tuning parameter $\lambda$.}
      \State $B\gets \max_{i=1,...,n}\min\{\tau,m_i\}
      w_i(5-1+\kappa)^2$.
      \Comment{Compute uniform upper bound.}
      \State Trim all users with ratings $>\tau$.
      \State $F(U,V) := \underset{i\in[i],j\in\Omega_i}{\sum}w_i(R_{ij}- u_i^Tv_j)^2 + \lambda (\|U\|_F^2+\|V\|_F^2)$.
      \State Sample $(U,V) \sim P(U,V)\propto e^{ -\frac{\epsilon}{4B} F(U,V) }$
      \While{$u_i^Tv_j \notin [1-\kappa,5+\kappa] \;\text{for some } i,j$}
      \State Sample $(U,V) \sim P(U,V)\propto  e^{ -\frac{\epsilon}{4B} F(U,V) }$
      \EndWhile\
      \State \textbf{return} $(U,V)$
  \end{algorithmic}
\end{algorithm}
\begin{theorem}\label{thm:privacy}
Algorithm~\ref{alg:DPMF} obeys $\epsilon$-differential privacy if the sample is exact and $(\epsilon,(1+e^\epsilon)\delta)$-differential privacy if the sample is from a distribution $\delta$-away from the target distribution in $L_1$ distance.
\end{theorem}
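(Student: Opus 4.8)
The plan is to reduce the exact-sampling case to the fact (invoked above from \citet{Wang2015}) that sampling from a posterior with uniformly bounded log-likelihood is differentially private, and then to upgrade to the approximate case by a short triangle-inequality argument. Since trimming and reweighting act per user, they map any pair of neighboring rating matrices to neighboring inputs, so it suffices to analyze the sampler on the trimmed, reweighted data. The first and most important step is to verify the uniform bound that the black-box result requires. On the data-independent sublevel set $\mathcal{C} := \{(U,V)\mid u_i^\top v_j \in [1-\kappa,5+\kappa]\ \forall i,j\}$ every squared residual satisfies $(R_{ij}-u_i^\top v_j)^2 \le (5-1+\kappa)^2$, because $R_{ij}\in[1,5]$. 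After the trimming step each user retains at most $\min\{\tau,m_i\}$ ratings, so user $i$'s weighted contribution to $F$ is $w_i\sum_{j\in\Omega_i}(R_{ij}-u_i^\top v_j)^2 \le \min\{\tau,m_i\}\,w_i(5-1+\kappa)^2 \le B$. Thus, restricted to $\mathcal{C}$, each user's log-likelihood term is bounded in magnitude by $B$, which is exactly the hypothesis needed; scaling the loss by $\epsilon/4B$ then yields $\epsilon$-differential privacy for the idealized, unconditioned sampler.

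The main obstacle is the rejection loop in Algorithm~\ref{alg:DPMF}, which means the realized output is not the posterior $P(U,V)$ but its restriction to $\mathcal{C}$, renormalized. I would argue that conditioning on $\mathcal{C}$ preserves privacy precisely because $\mathcal{C}$ depends only on the public parameter $\kappa$ and not on the data. Writing $\mu_R$ for the conditional law with unnormalized density $f_R = \exp(-\tfrac{\epsilon}{4B}F_R)$ restricted to $\mathcal{C}$, for neighboring $R,R'$ the bound $|F_R-F_{R'}|\le B$ (only a single user's contribution changes) controls the pointwise ratio $f_R/f_{R'}$, and the same bound controls the ratio of the restricted normalizing constants $\int_{\mathcal{C}} f_R$ and $\int_{\mathcal{C}} f_{R'}$. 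Multiplying the two bounded ratios gives $\mu_R(S)\le e^{\epsilon}\mu_{R'}(S)$ for every measurable $S\subseteq\mathcal{C}$, so the conditional sampler is exactly $\epsilon$-differentially private. It is worth emphasizing that this conditioning is not merely a technical nuisance: it is what makes the residual bound on $\mathcal{C}$, and hence the constant $B$, legitimate in the first place.

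For the approximate case, suppose the realized output law $\hat\mu_R$ satisfies $\|\hat\mu_R-\mu_R\|_1\le\delta$, so that $|\hat\mu_R(S)-\mu_R(S)|\le\delta$ for every measurable $S$ (this is immediate from the definition of $L_1$/total-variation distance). I would then chain this with the exact guarantee: for neighboring $R,R'$,
$$\hat\mu_R(S) \le \mu_R(S)+\delta \le e^{\epsilon}\mu_{R'}(S)+\delta \le e^{\epsilon}\big(\hat\mu_{R'}(S)+\delta\big)+\delta = e^{\epsilon}\hat\mu_{R'}(S) + (1+e^{\epsilon})\delta,$$
which is exactly $(\epsilon,(1+e^{\epsilon})\delta)$-differential privacy. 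The only point requiring attention here is that the $L_1$ perturbation must be accounted for on both $\mu_R$ and $\mu_{R'}$, which is what produces the factor $1+e^{\epsilon}$ rather than a single $\delta$; everything else is routine.
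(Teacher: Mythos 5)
Your proposal is correct and takes essentially the same route as the paper's own proof: both arguments reduce to the exponential mechanism with utility $-F(U,V)$ and a per-user sensitivity controlled by the trimming/reweighting together with the sublevel-set constraint $u_i^\top v_j\in[1-\kappa,5+\kappa]$; your density-ratio-plus-normalizer-ratio computation is just the exponential mechanism's guarantee unpacked. You are in fact somewhat more explicit than the paper on the two points it glosses over, namely why the rejection loop is harmless (data-independence of the set $\mathcal{C}$, which also controls the conditional normalizing constants) and the triangle-inequality derivation of the $(\epsilon,(1+e^\epsilon)\delta)$ claim, which the paper defers to Proposition 3 of the cited prior work.
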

The proof (given in the appendix), shows that this procedure uses in
fact the exponential mechanism \citep{mcsherry2007mechanism} with
utility function being the negative MF objective and its sensitivity
being $2B$. Note that this can be extended to considerably more
complex models. This is the strength of our approach, namely that a
large variety of algorithms can be adapted quite easily to
differential privacy capable models.


\noindent
{\bfseries Statistical properties.}
How about the utility of this procedure? We argue that we do not lose much accuracy by sampling from the a distribution instead of doing exact optimization. Here we define utility/accuracy to be how well this output predicts for new data.

Our matrix factorization formulation can be treated as a maximum a posteriori (MAP) estimator of the Bayesian Probabilistic Matrix Factorization (BPMF) \citep{Salakhutdinov2008}, therefore, this distribution we are sampling from is actually a scaled-version of the posterior distribution.

When $\epsilon=4B$, \citet{Wang2015} shows that a single sample from
the posterior distribution is consistent whenever the Bayesian model
that gives rise to $f(\theta)$ is consistent and asymptotically only a
factor of $2$ away from matching the Cram\'{e}r-Rao lower bound
whenever the asymptotic normality (Bernstein-Von Mises Theorem) of the
posterior distribution holds. Therefore, we argue that by taking only
one sample from the posterior distribution, our results will not be
much worse than estimating the MAP or the posterior mean estimator in
BPMF. Moreover, since the results do not collapse to a point estimator,
the output from this sampling procedure does not tend to overfit
\citep{Welling2011}.

When $\epsilon<4B$ we will start to lose accuracy, but since we are
still sampling from a scaled posterior distribution, the same
statistical property applies and the result remains
asymptotically near optimal with asymptotic relative efficiency $1+\sqrt{4B/\epsilon}$. In fact, monotonic rescaling of $U$ and $V$ leaves
the relative \emph{order} of ratings unchanged.

\subsection{Personalized Differential Privacy}
Another interesting feature of the proposed procedure is that it allows us to calibrate the level of privacy protection for every user independently, via a novel observation that weights assigned to different users are linear in the amount of privacy we can guarantee for that particular user.


We will use the same sampling algorithm, and our guarantees in
Theorem~\ref{thm:privacy} still hold. The idea here is that we can
customize the system so that we get a lower basic privacy protection
for all users, say $\epsilon=4B$. As we explained earlier this is the
level of privacy that we can get more or less ``for free''. The
protection of DP is sufficiently strong as to include even those
users that are not in the database.

By adjusting the weight parameter, we can make the privacy protection
stronger for particular users according to how much they set they want
privacy. This procedure makes intuitive sense because if some user
wants perfect privacy, we can set their weight to $0$ and they are
effectively not in the database anymore. For people who do not care
about privacy, their ratings will be assigned default
weight. Formally, we define personalized differential privacy as
follows:
\begin{definition}[Personalized Differential Privacy]\label{def:DPpersonal}
	An algorithm $\cA$ is $(\epsilon,\delta)$-personalized differentially private for User $i$ in database $X$ if for any measureable set $S$ in the range of the algorithm $\cA$
	$$
	\P(\cA(X)\in S) \leq  e^{\epsilon}\P(\cA(X')\in S) +\delta.
	$$
	for any $X\in \cX^n$ and $X'$ is either $X\cup \{x_i\}$ or $X\backslash\{x_i\}$.
\end{definition}
We claim that:
\begin{theorem}\label{thm:personal_privacy}
	If we set $w_i$ for User-$i$ such that
	$$B_i:=\min\{\tau,m_i\}w_i(4+\kappa)^2\leq B,$$
	then Algorithm~\ref{alg:DPMF} guarantees $\frac{\epsilon
		B_i}{2B}$-personalized differential privacy for User $i$.
\end{theorem}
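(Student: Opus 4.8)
The plan is to prove Theorem~\ref{thm:personal_privacy} as a \emph{localized} version of Theorem~\ref{thm:privacy}: instead of bounding the sensitivity of the sampling utility against an arbitrary Hamming-$1$ change, I would bound it only against the addition or removal of User~$i$'s record, and then feed this smaller, user-specific sensitivity into the exponential-mechanism privacy bound. Concretely, I first note that the sampling step of Algorithm~\ref{alg:DPMF} draws $(U,V)$ with density proportional to $\exp\rbr{\tfrac{\epsilon}{4B}\,q(X,(U,V))}$, where $q(X,(U,V)):=-F(U,V)$ is the negative MF objective viewed as a utility function of the data $X=R$. This is exactly the exponential mechanism of \citet{mcsherry2007mechanism} with inverse temperature $\tfrac{\epsilon}{4B}$, the Gaussian regularizer $\lambda(\nbr{U}_F^2+\nbr{V}_F^2)$ playing the role of a data-independent base measure.

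The heart of the argument is the per-user sensitivity computation. For the personalized neighboring relation of Definition~\ref{def:DPpersonal}, $X'$ is obtained from $X$ by adding or deleting all of User~$i$'s ratings, so for any fixed output $(U,V)$ the only terms of $F$ that change are those indexed by $j\in\Omega_i$ with weight $w_i$; in particular the regularizer $\lambda(\nbr{U}_F^2+\nbr{V}_F^2)$ is identical under $X$ and $X'$ and cancels. Hence
\begin{align*}
\Delta_i := \max_{(U,V)}\abr{q(X,(U,V))-q(X',(U,V))} = \max_{(U,V)}\sum_{j\in\Omega_i} w_i\rbr{R_{ij}-\inner{u_i}{v_j}}^2 .
\end{align*}
I would then bound $\Delta_i$ using the two ingredients already built into the algorithm: trimming caps the number of summands at $\min\{\tau,m_i\}$, while on the sublevel set enforced by the while-loop we have $\inner{u_i}{v_j}\in[1-\kappa,5+\kappa]$ and $R_{ij}\in[1,5]$, so each residual obeys $\abr{R_{ij}-\inner{u_i}{v_j}}\le 4+\kappa$. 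Together these give $\Delta_i \le \min\{\tau,m_i\}\,w_i\,(4+\kappa)^2 = B_i$.

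With the sensitivity in hand the privacy level follows from the standard exponential-mechanism bound: sampling proportionally to $\exp(c\,q(X,\cdot))$ with utility sensitivity $\Delta_i$ changes the output density between neighbors by at most a factor $\exp(2c\Delta_i)$. Substituting $c=\tfrac{\epsilon}{4B}$ and $\Delta_i=B_i$ yields $\exp\rbr{2\cdot\tfrac{\epsilon}{4B}\cdot B_i}=\exp\rbr{\tfrac{\epsilon B_i}{2B}}$, which is precisely $\tfrac{\epsilon B_i}{2B}$-personalized differential privacy for User~$i$; the hypothesis $B_i\le B$ merely records that $B=\max_i B_i$ is the constant actually used to set the inverse temperature in Algorithm~\ref{alg:DPMF}. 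Since $B_i\propto w_i$, this also delivers the advertised \emph{linear} relationship between a user's weight and their privacy budget, with $w_i=0$ recovering perfect privacy.

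The two steps I expect to need the most care are both about keeping the constant tight. First, the rejection loop must not cost a factor of two: I would argue that restarting until $(U,V)$ lands in the data-independent set $\mathcal{K}:=\cbr{(U,V)\mid \inner{u_i}{v_j}\in[1-\kappa,5+\kappa]\ \forall i,j}$ produces \emph{exactly} the exponential mechanism with the same utility and temperature but restricted to the domain $\mathcal{K}$, so the normalization bound runs verbatim over $\mathcal{K}$ and still gives $\exp(2c\Delta_i)$, rather than the $\exp(4c\Delta_i)$ one would obtain by naively treating rejection as conditioning on an event. Second, I must be careful about what ``removing User~$i$'' means, since their factor $u_i$ would seem to vanish from the output. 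The clean resolution is to model removal as emptying the observation set $\Omega_i$, so that $u_i$ survives as a free parameter governed only by its data-independent prior $\lambda\nbr{u_i}^2$; this prior is identical under $X$ and $X'$ and cancels, leaving the one-sided change $\sum_{j\in\Omega_i}w_i\rbr{R_{ij}-\inner{u_i}{v_j}}^2\le B_i$ (equivalently, since the algorithm publicly releases only $V$, one compares the $V$-marginals directly). It is exactly this one-sidedness --- removal contributes $B_i$, not $2B_i$ --- that halves the global sensitivity and explains the factor $\tfrac12$ in $\tfrac{\epsilon B_i}{2B}$. The approximate-sampling case then carries over just as in Theorem~\ref{thm:privacy}, replacing pure privacy by its $(\,\cdot\,,(1+e^{\cdot})\delta)$ analogue.
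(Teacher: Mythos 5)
Your proposal is correct and follows essentially the same route as the paper: the paper's proof bounds the posterior ratio by factorizing it into a likelihood factor and a normalizer factor, each controlled by the user's (scaled) log-likelihood bound $\tfrac{\epsilon}{4B}B_i$, which is exactly your exponential-mechanism bound $\exp(2c\Delta_i)$ with $c=\tfrac{\epsilon}{4B}$ and one-sided per-user sensitivity $\Delta_i\le B_i$. Your treatment of the rejection loop and of ``removal'' as emptying $\Omega_i$ likewise matches the remarks the paper makes in the proof of Theorem~\ref{thm:privacy}, so no new ideas are missing.
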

The proof is a straigtforward verification of
the definition. We defer it to the Appendix. Note that if we set $\epsilon=4B$ (so we are
essentially sampling from the posterior distribution), we get
$2B_i$-Personalized DP for user $i$.

In summary, if we simply set $\epsilon=4B$, the method protects $4B$-differential privacy for everybody at very little cost and by setting the weight vector $w$, we can provide personalized service for users who demands more stringent DP protection. To the best of our knowledge, this is the first method of its kind to protect differential privacy in a \emph{personalized} fashion.

\section{Efficient sampling via SGLD}\label{sec:sgld}

Clearly, sampling from $\exp\rbr{-\frac{\epsilon}{4B} F(U,V)}$ is
nontrivial. For a tractable approach we use a recent MCMC method named
stochastic gradient Langevin dynamics (SGLD) \citep{Welling2011},
which is an annealing of stochastic gradient descent and Langevin
dynamics that samples from the posterior distribution
\citep{neal2011mcmc}. The basic update rule is
\begin{align}\label{eq:sgld}
(u_i,v_j) = (u_i,v_j) - \eta_t \widehat{\nabla}_{(u_i,v_j)} F(U,V) + \cN(0,\eta_t I)
\end{align}
where $\widehat{\nabla}_{(u_i,v_j)} F(U,V)$ is a stochastic gradient
computed using only one or a small number of ratings. In other words,
the updates are almost identical to those used in stochastic gradient
descent. The key difference is that a small amount of Gaussian noise
is added to the updates. This allows us to solve it extremely
efficiently.  We will describe our efficient implementation of this
algorithm in Section \ref{sec:fsgld}.

The basic idea of SGLD is that when we are far away from the basin of
convergence, the gradient of the log-posterior
$\widehat{\nabla}_{(u_i,v_j)} F(U,V)$ is much larger than
the additional noise so the algorithm behaves like stochastic gradient
descent. As we approach the basin of convergence and $\eta_t$ becomes
small, $\sqrt{\eta_t}\gg \eta_t$ so the noise dominates and it behaves
like a Brownian motion. Moreover, as $\eta_t$ gets small, the
probability of accepting the proposal in Metropolis-Hastings
adjustment converges to $1$, so we do not need to do this adjustment
at all as the algorithm proceeds, as designed above.

This seemingly heuristic procedure was later shown to be consistent in
\citep{sato2014approximation,teh2014consistency}, where asymptotic
``in-law'' and ``almost sure'' convergence of SGLD to the correct
stationary distribution are established. More recently,
\citet{vollmer2015non} further strengthens the convergence guarantee
to include any finite iterations. This line of work justifies our
approach in that if we run SGLD for a large number of iterations, we
will end up sampling from the distribution that provides us
$(\epsilon,\delta)$-differential privacy. By taking more iterations,
we can make $\delta$ arbitrarily small.


\section{System Design}

The performance improvement over existing libraries such as GraphChi
are due to both cache efficient design, prefetching, pipelining, the
fact that we exploit the power law property of the data, and by
judicious optimization of random number generation. This leads to a
system that comfortably surpasses even moderately optimized GPU
codes.

We primarily focus on the Stochastic Gradient Descent solver and
subsequently we provide some details on how to extend this to
SGLD. Inference requires a very large number of following operations on data:
\begin{itemize}
\item Read a rating triple $(i,j,r_{ij})$, possibly from
  disk, unless the data is sufficiently tiny to fit into RAM.
\item For each given pair $(i,j)$ of users and items fetch
  the vectors $u_i$ and $v_j$ from memory.
\item Compute the inner product $\inner{u_i}{v_j}$ on the CPU.
\item Update $u_i, v_j$ and write their new values to RAM.
\end{itemize}
To illustrate the impact of these operations consider training a $2,048$
dimensional model on the $10^8$ rating triples of Netflix. Per
iteration this requires over 3.2TB read/write operations to RAM. At a
main memory bandwidth of 20GB/s and a latency of 100ns for each of the
200 million cache misses each pass would take over 6 minutes. Instead,
our code accomplishes this task in approximately 10 seconds by using
the steps outlined below.

\subsection{Processing Pipeline}

To deal with the dataflow from disk to CPU, we use a pipelined
design, decomposing global and local state akin to
\cite{Ahmed2012}. 
This means that we process users
sequentially, thus reducing the retrieval cost per user, since the
operations are amortized over all of their ratings. This effectively
halves IO. Moreover, since the data cannot be assumed to fit into RAM,
we pipeline reads from disk. This hides latency and avoids stalling
the CPUs. The writer thread periodically snapshots the model, i.e.\
$U$ and $V$ to disk.

Note that for personalized recommender systems that require
considerable personalized hidden state, such as topic models, or
autoregressive processes, we may want to write a snapshot of the
user-specific data, too.

\begin{algorithm}
\caption{Cache efficient Stochastic Gradient Descent}\label{alg:sgd}
\begin{algorithmic}[1]
\Require parameters $U$, $V$; ratings $R$; $P$ threads,
\State {\bfseries preprocessing} Split $R$ into $B$ blocks;
\Procedure{Read}{} \Comment{Keep pipeline filled}
\While {\#blocks in flight $\le P$}
        \State Read: block $b$ from disk
        \State Sync: notify {\sc Update} about $b$
\EndWhile
\EndProcedure
\Procedure{Update}{}  \Comment{Update $U$, $V$}
      \While{at least one of $P$ processors is available}
      \State Sync: receive a new block $b$ from {\sc Read}
      \For {user $i$ in $b$}
      		\For{each rating $r_{ij} \in b$ from user $i$}
			\State Prefetch next movie factor $v_{j+1}$ from data stream
			\State $u_i \gets u_i - \eta_t \widehat{\nabla}_{u_i}$
			\State $v_j \gets v_j - \eta_t
                        \widehat{\nabla}_{v_j}$
                        \State ($\widehat{\nabla}$ is
                          either the exact or private gradient)
		\EndFor
		      \EndFor
      \EndWhile
\EndProcedure
\Procedure{Write}{}
\State {\bfseries if} $B_t$ blocks processed {\bfseries then}
 save $U,V$
\EndProcedure
\end{algorithmic}
\end{algorithm}

\subsection{Cache Efficiency}

The previous reasoning discussed how to keep the data pipeline filled
and how to reduce the user-specific cache misses by preaggregating them
on disk. Next we need to
address cache efficiency with regard to movies.  More to
the point, we need to exploit cache locality relative to the CPU
\emph{core} rather than simply avoiding cache misses.
The basic idea is that each CPU core exactly reads a cache line
(commonly 64 bytes) from RAM each time, so algorithm designers should
not waste it until that piece of cache line is fully utilized.

We exploit the fact that movie ratings follow a power law
\cite{hartstein2008nature}, as is evident e.g.\ on Netflix in
Figure~\ref{fig:powerlaw}. This means that if we succeed at keeping
frequently rated movies in the CPU cache, we should see substantial
speedups. Note that traditional matrix blocking tricks, as widely used
for matrix multiplications operations are not useful, due to the
sparsity of the rating matrix $R$. Instead, we decompose the movies
into tiers of popularity. To illustrate, considering a decomposition into three
blocks consisting of the Top 500, the Next 4000, and the remaining
long tail.

Within each block, we process a batch of users simultaneously. This
way we can preserve the associated user vectors $u_i$ in cache and we
are likely to cache the movie vectors, too (in particular for the Top
500 block). Also, parallelizing all the updates for multiple users
does not require locks. Movie parameters are updated in a Hogwild
fashion \cite{Niu2011}.

This design is particularly efficient for low-dimensional models since
the Top 500 block fits into L1 cache (this amounts to 44\% of all
movie ratings in the Netflix dataset), the Next 4000 fits into L2, and
ratings will typically reside in L3. Even in the extreme case of
$2048$ dimensions we can fit about $55\%$ of all ratings into cache,
albeit L3 cache.

\begin{figure}
\centering
\includegraphics[width=\columnwidth]{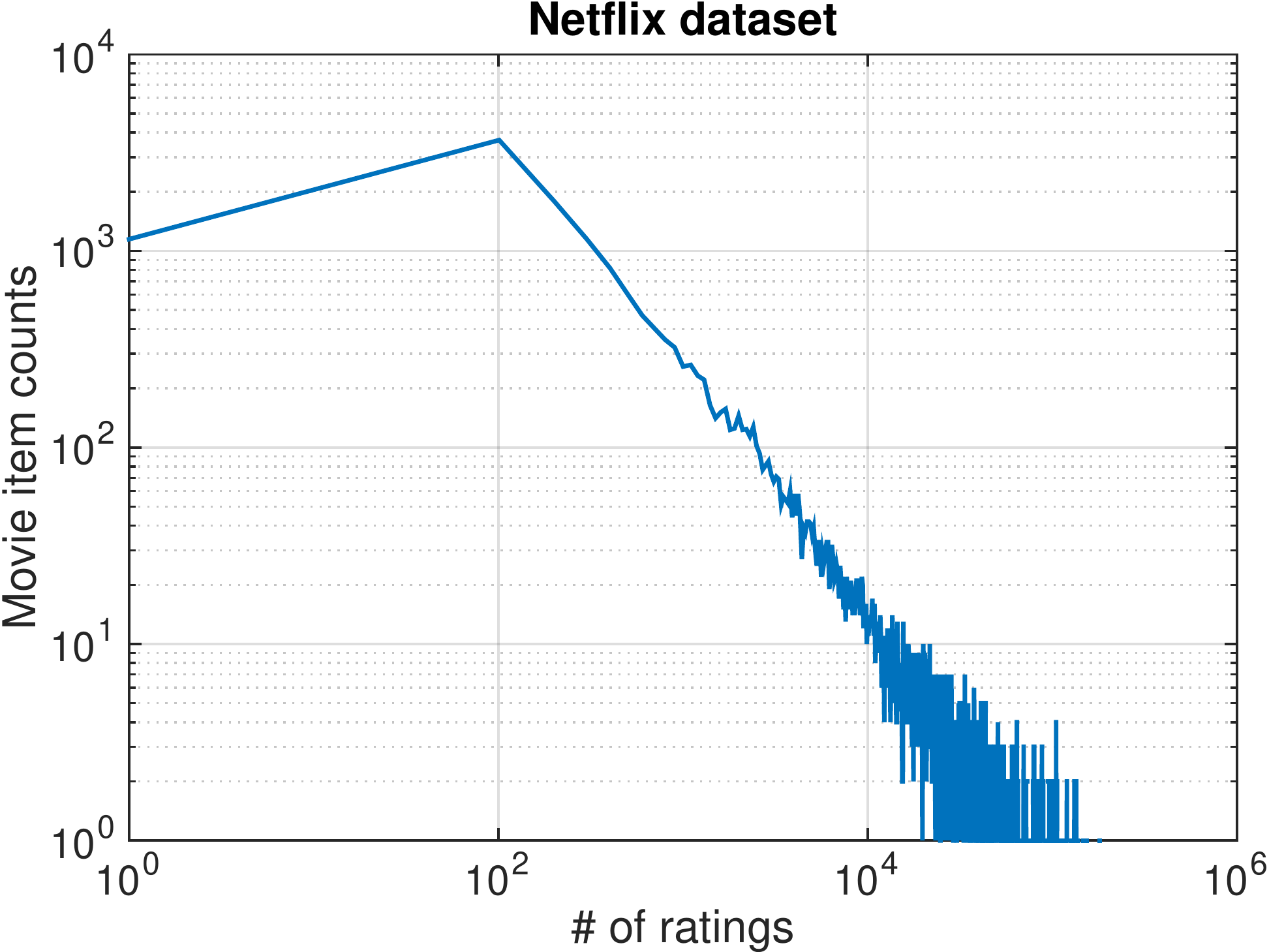}
\includegraphics[width=\columnwidth]{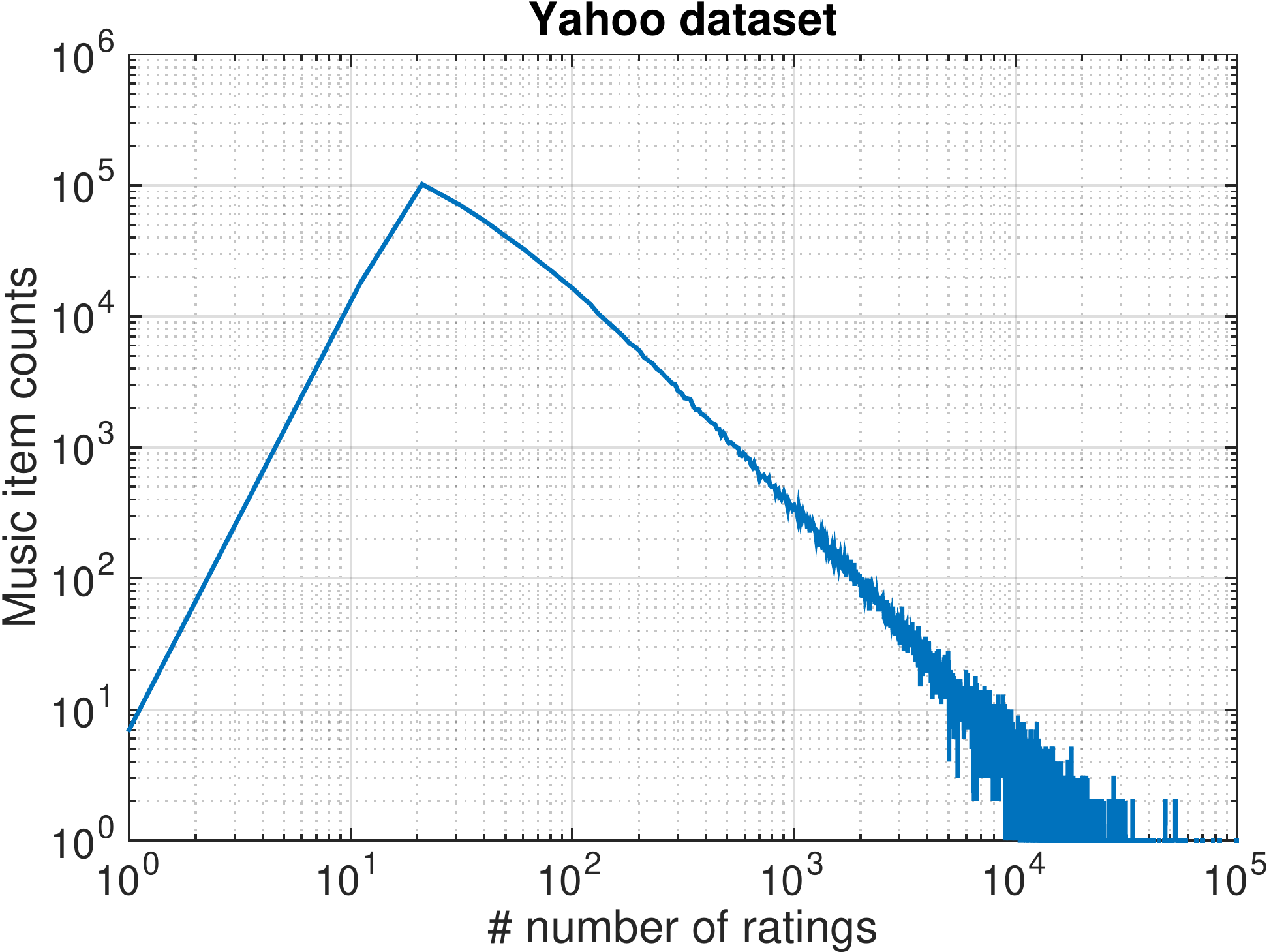}
\caption{Distribution of items (Movies/Music pieces) as a function of their number of ratings. Many movies have 100 ratings or less, while the majority of ratings focuses on a small number of movies.\label{fig:powerlaw}}
\end{figure}

\subsection{Latency Hiding and Prefetching}

To avoid the penalty for random requests we perform latency hiding by
prefetching. That is, we actively request $v_j$ in advance before the
rating $r_{ij}$ is to be updated. For dimensions less than 256,
accurate prefetching leads to a dataflow of $v_j$ into L1
cache. Beyond that, the size of the latent variables could be too big to
benefit from the lowest level of caching due to limited size of caches in modern computers. We provide a detailed
caching analysis in Section~\ref{sec:experiments} to illustrate the
effect of these techniques.

\subsection{Optimizations for SGLD}
\label{sec:fsgld}

The data flow of SGLD is almost analogous to that in SGD, albeit with
a number of complications. First off, note that \eq{eq:sgld} applies
to the whole parameter \emph{matrix} $U,V$ rather than just to a
single vector. Following \cite{Clewett2015} we can derive an unbiased
approximation of $\widehat{\nabla}_{u_i}$ in \eq{eq:sgld} which is
nonzero only for $(u_i, v_j)$ as follows:
$$
\widehat{\nabla}_{u_i} = -N \lambda_r \, (r_{ij} - \inner{u_i}{v_j})\,v_j + \frac{N}{N_i}u_i^\top \Lambda_u u_i
$$
where $N,N_i$ denote number of rating data rated by all and rated by user $i$ respectively. The parameters $\lambda_r, \Lambda_u, \Lambda_v$ do not incur any
major cost --- $\Lambda_u, \Lambda_v$ are diagonal matrices with a
Gamma distribution over them. We simply perform Gibbs sampling once
per round. However, the most time-consuming part is to sample the
remaining vectors, i.e.\ P$(U^{-i},V^{-i}|R,\mathrm{rest})$ since it
both requires dense updates and moreover, it requires many random
numbers, which adds nontrivial cost.  
\begin{description}
\item[Dense Updates:] Note that unless we encounter the triple
  $(i,j,r_{ij})$ all other parameters are only updated by adding
  Gaussian noise. This means that by keeping track of when a parameter
  was last updated, we can simply aggregate the updates (the Normal
  distribution is closed under addition). That is, $c_i$ subsequent
  additions amount to a single draw from $\mathcal{N}(0, c_i \eta)$. The is
  possible since we only need to know the value of $u_i, v_j$ whenever
  we encounter a new triple. 
\item[Table Lookup:] Drawing iid samples from a Gaussian is quite
  costly, easily dominating all other floating point operations
  combined. We address this by pre-generating a large table of
  numbers \cite{Marsaglia2004} and then by performing random lookup
  within the table. More to the point, a lookup table of $r$ random
  numbers is statistically indistinguishable from the truth until we
  draw $O(r^2)$ samples from it (this follows from the slow rate of
  convergence for two-sample tests), hence a few MB of data
  suffice. Finally, for cache efficiency, we read contiguous segments
  with random offset (this adds a small amount of dependence which is
  easily addressed by using a larger table). 
  
  A cautionary note is that the impact of this approach on privacy, namely how it affects the stationary distribution of the SGLD, is unknown. In our experiments, the results are indistinguishable for any moderately sized finite look-up tables (see our experiments in Section~\ref{sec:exp_conv}).
\end{description}

\section{Experiments and Discussion}
\label{sec:experiments}

\begin{figure*}[tbfh]
	\centering
\includegraphics[width=0.46\textwidth,height=0.4\textwidth]{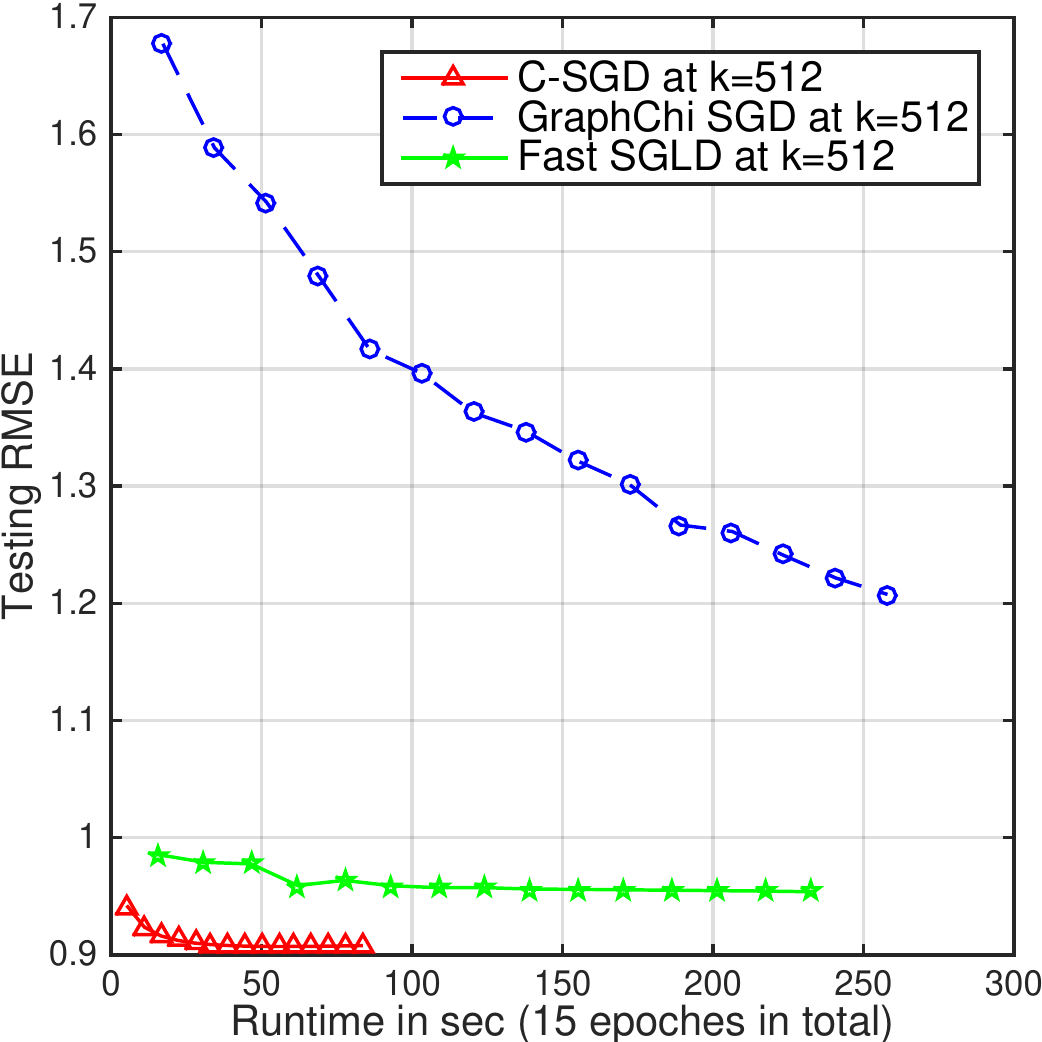}
\includegraphics[width=0.46\textwidth,height=0.4\textwidth]{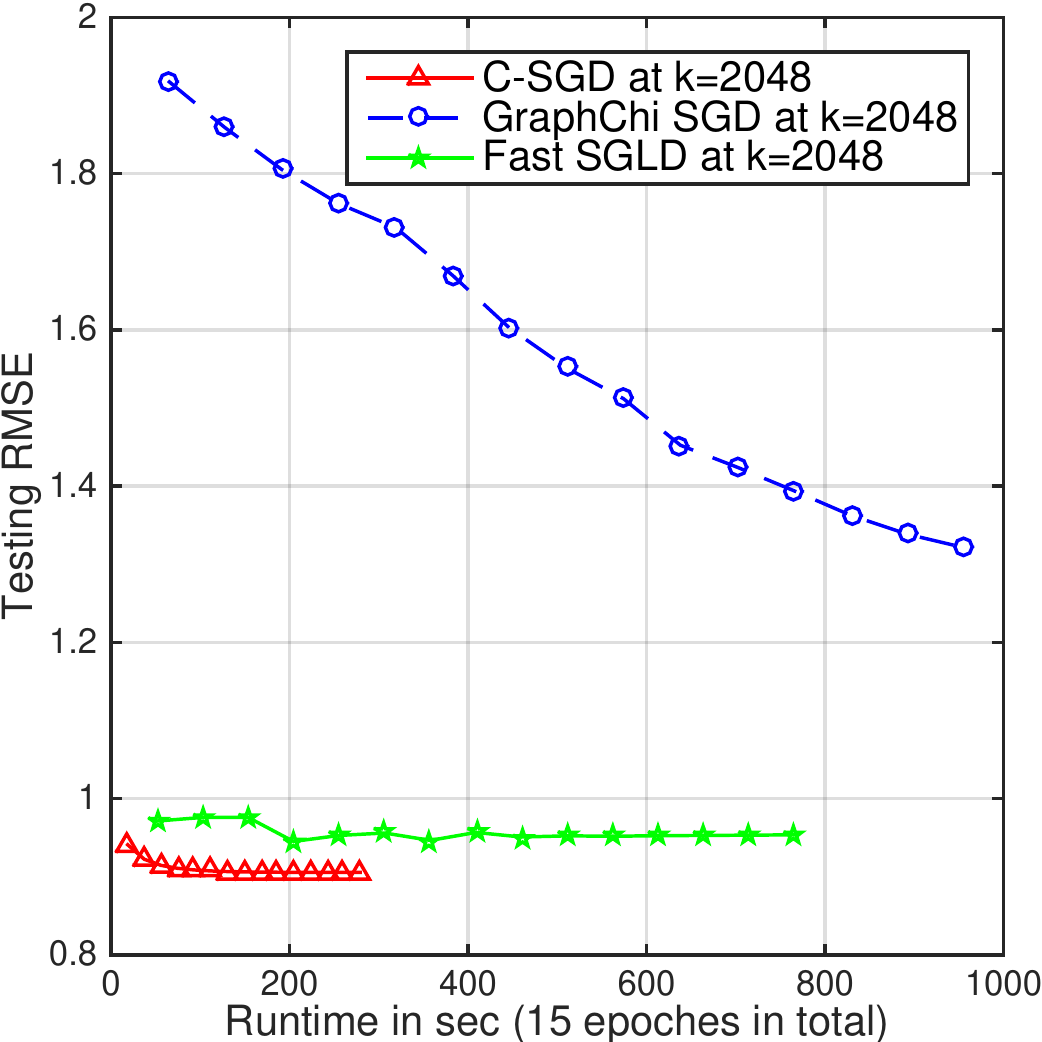}
\includegraphics[width=0.46\textwidth,height=0.4\textwidth]{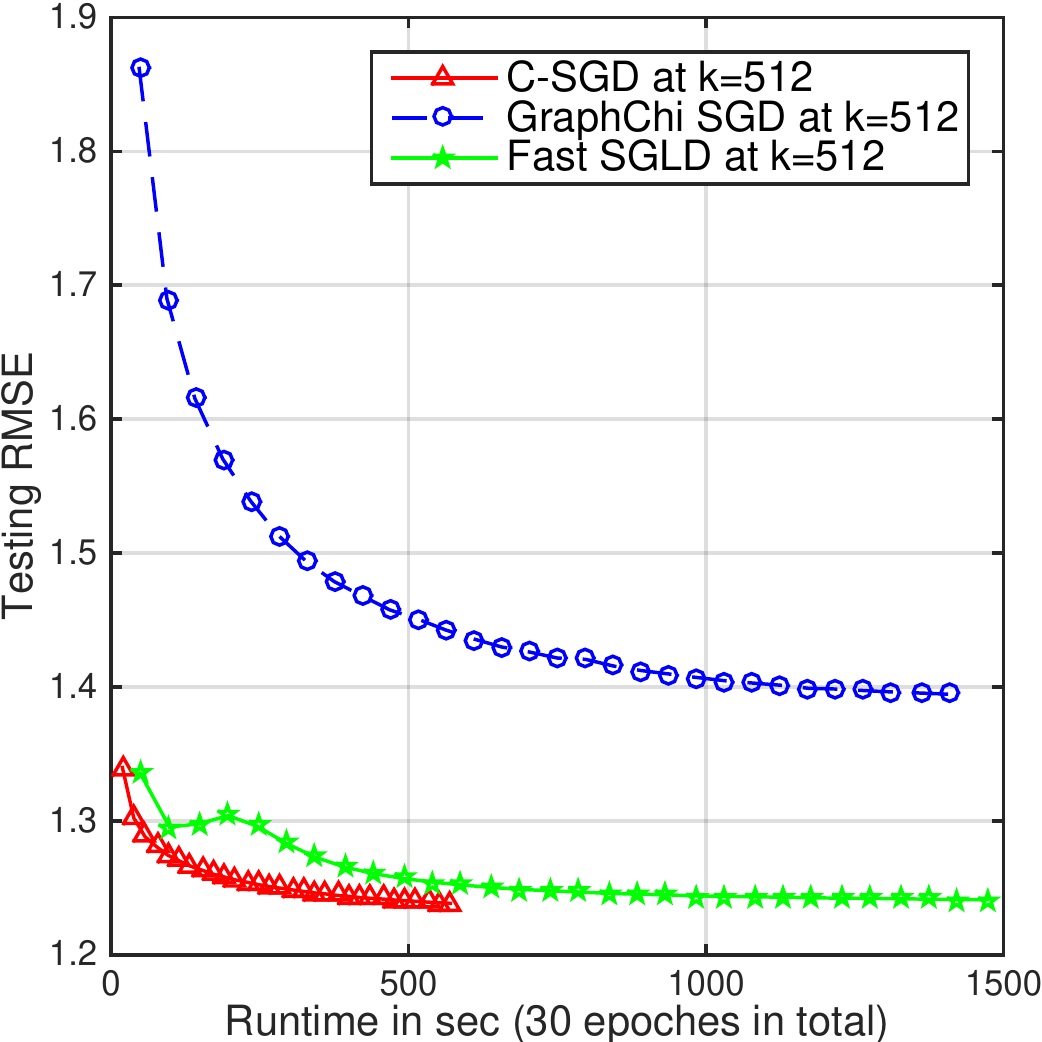}
\includegraphics[width=0.46\textwidth,height=0.4\textwidth]{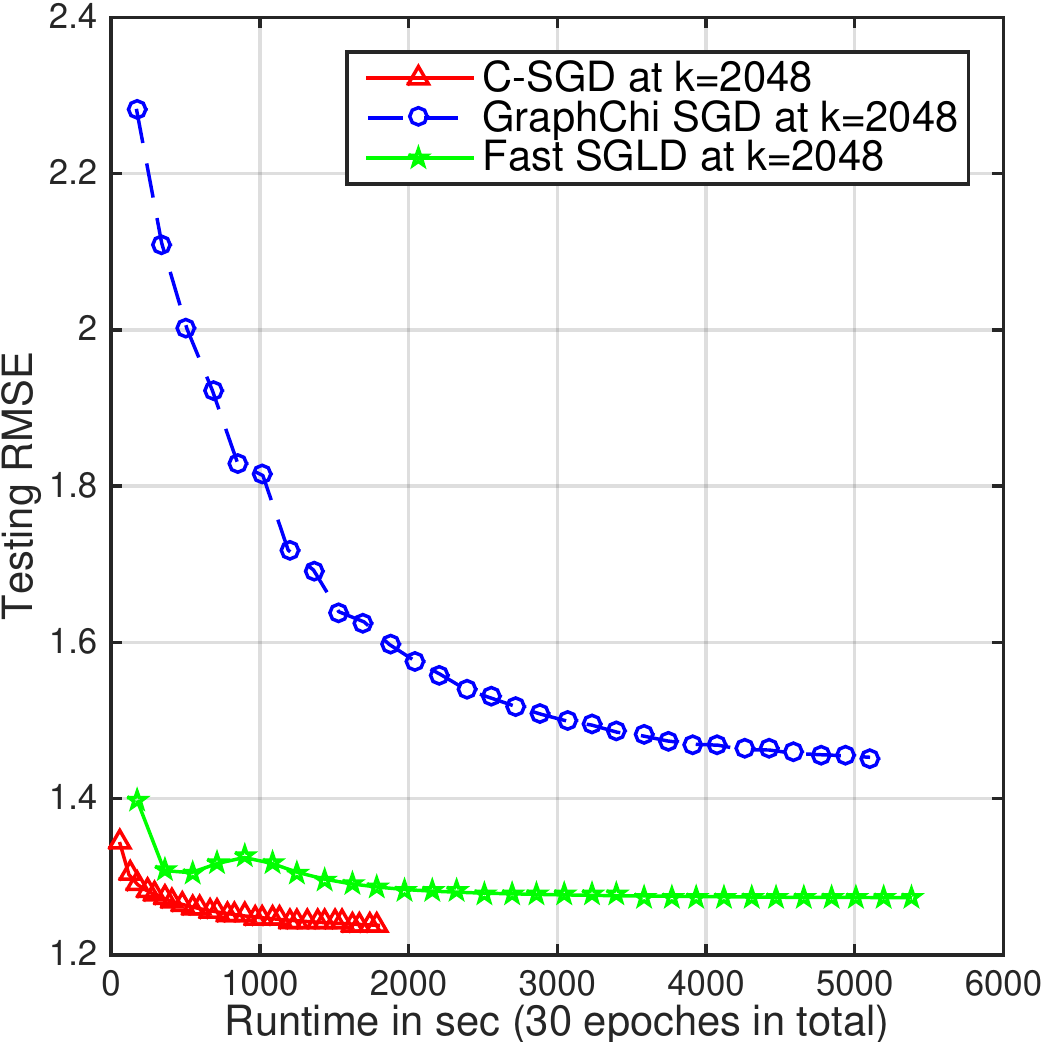}
\caption{Runtime comparisons of the C-SGD solver, differentially private SGLD solver
  vs.\ non-private GraphChi/Graphlab on identical hardware, a Amazon AWS c3.8xlarge instance. Note that regardless of the dimensionality of
  the factors (512, 2048) our C-SGD is
  approximately 2-3 times faster than GraphChi, and differentially private SGLD also can be comparable with Graphchi in very high dimension (Top: Netflix, Bottom: Yahoo).\label{fig:timing}}
\end{figure*}

\begin{figure}[tbhf]
\centering
\includegraphics[width=\columnwidth]{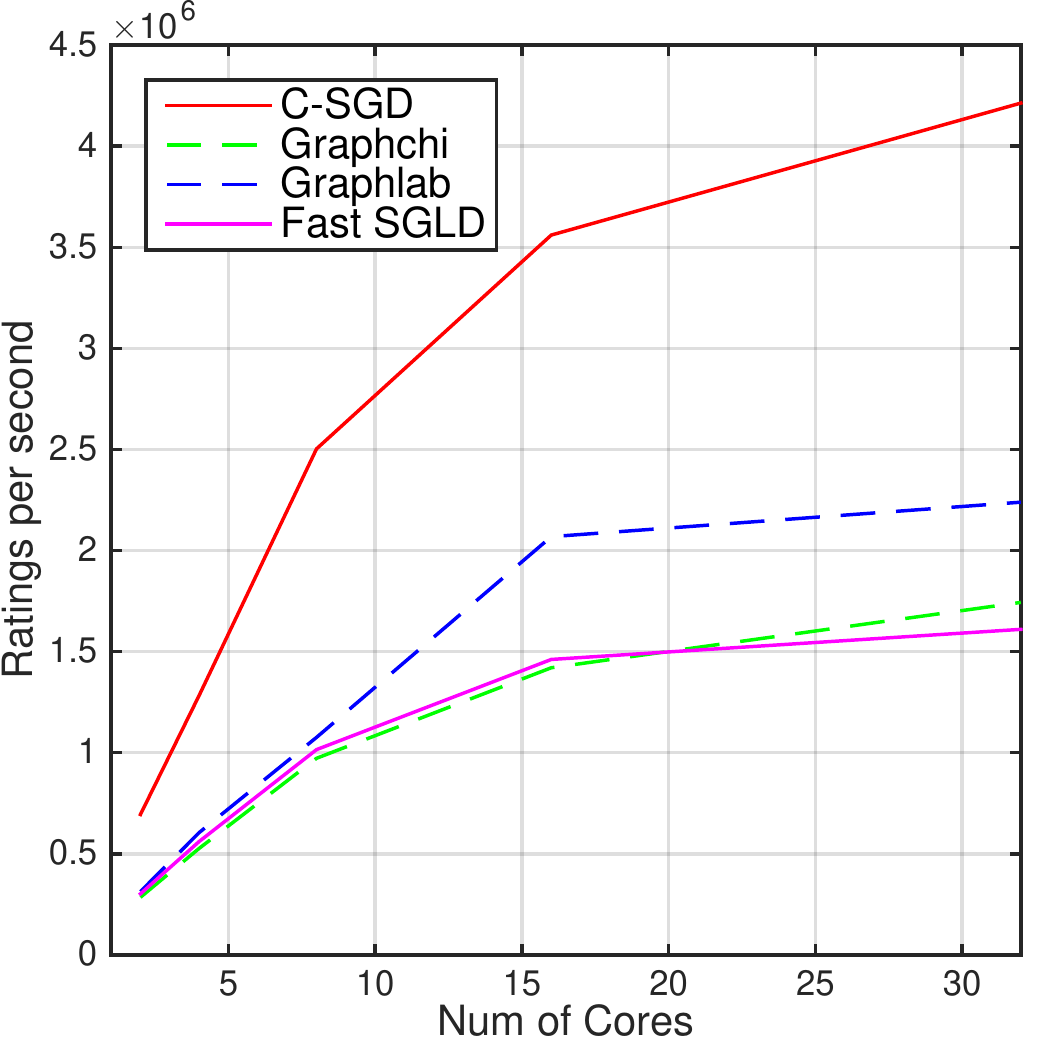}
\caption{Throughput on Yahoo over different number of cores with dimension 2048.\label{fig:through}}
\end{figure}

\begin{figure}[tbhf]
\centering
\includegraphics[width=\columnwidth]{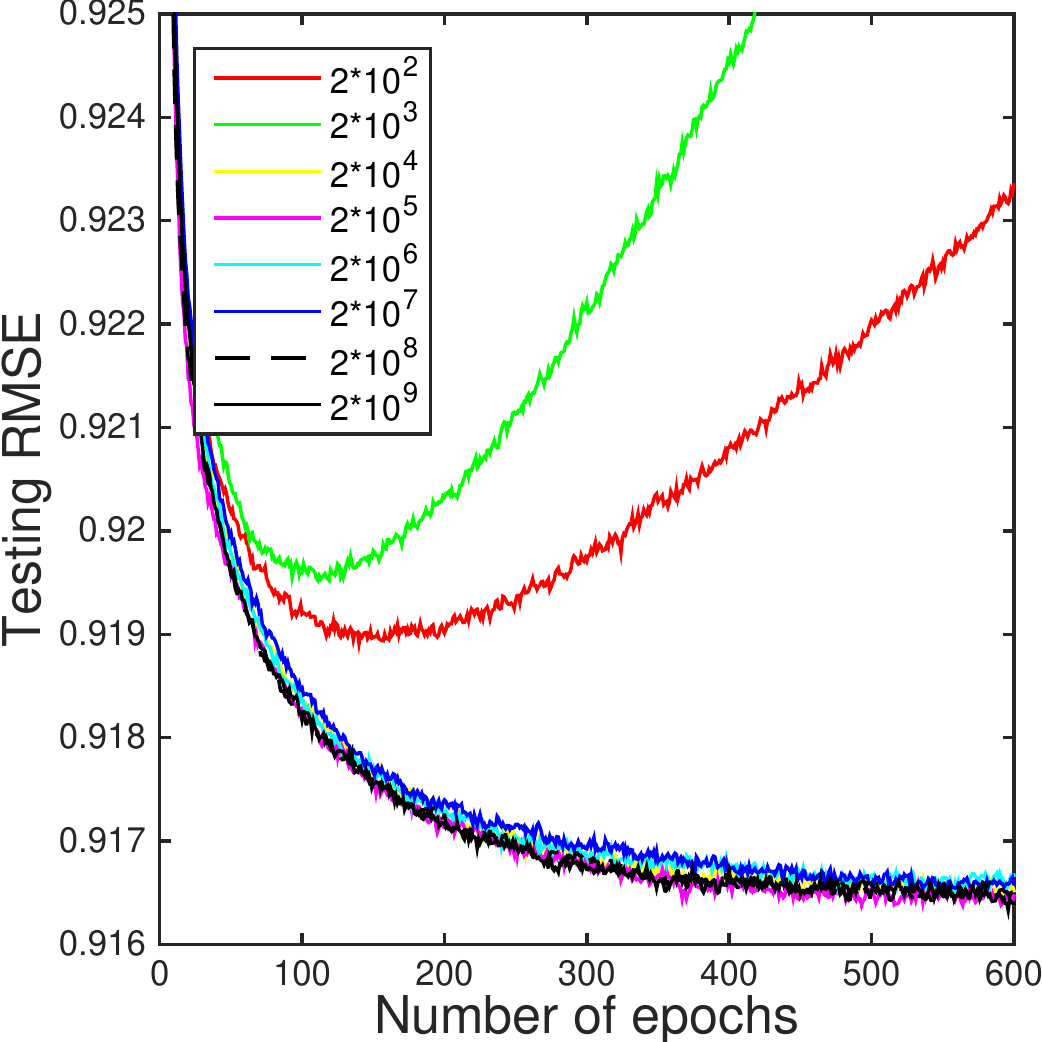}
\caption{Size of Gaussian lookup table vs.\ test RMSE on Netflix data with dimension 16.\label{fig:scale}}
\end{figure}

We now investigate the efficiency and accuracy of our fast SGD solver and Stochastic
Gradient Langevin Dynamics solver, compared with state-of-the-art
available recommenders. We also explore the differentially private accuracy by using our proposed
method while varying different privacy budgets. 


\subsection{Comparisons}

We compare the performance of both the SGD solver and the SGLD solver
to other publicly available recommenders and one closed-source
solver. In particular, we compare to both CPU and GPU solvers, since
the latter tend to excel in massively parallel floating point operations. 
\begin{description}
\item[GraphChi] Most of our experiments focus on a direct comparison to
GraphChi~\cite{Kyrola2012}. This is primarily due the fact that the
code for GraphChi is publicly available as open source and its very
good performance. 
\item[GraphLab Create] is a closed source data analysis platform~\citep{graphlab}. It
  is currently the fastest recommender system available, being
  slightly faster than GraphChi. We compared our system to GraphLab
  Create, albeit without fine-grained diagnostics that were possible
  for GraphChi. 
\item[BidMach] is a GPU based system \cite{BidMach}. It reports
  runtimes of 90, 129 and 600 seconds respectively for 100, 200 and
  500 dimensions using an Amazon g2.2xlarge instance for the Netflix
  dataset.\footnote{\url{http://github.com/BIDData/BIDMach/wiki/Benchmarks}} This is slower than the runtimes of 48, 63, and 83 seconds
  for 128, 256, and 512 that we achieve without GPU optimization on a
  c3.8xlarge instance. 
\item[Spark] is a distributed system (Spark MLlib) for inferring
  recommendations and factorization. In recent comparison the argument
  has been made that it is somewhat
  slower\footnote{\url{http://stanford.edu/~rezab/sparkworkshop/slides/xiangrui.pdf},
    Slide 31} than GraphLab while being substantially faster than
  Mahout. 
\end{description}
\smallskip
\subsection{Data}

We use two datasets --- the well known Netflix Prize dataset,
consisting of a training set of 99M ratings spanning 480k customers
and their ratings on almost 18k, each movie being rated at a scale of
$1$ to $5$ stars. Additionally, we use their released validation set
which consists of $1.4$M ratings for validation purposes. 

Secondly, we use the Yahoo music recommender dataset, consisting of
almost 263M ratings of 635k music items by 1M users. We also use the released validation set which consists of 6M ratings for validation. We re-scale each rating at a scale of $0$ to $5$. We compare
performance on both datasets since their sampling strategies are
somewhat incomparable (e.g.\ Netflix has considerable covariate shift
in the test dataset). Moreover, this larger dataset poses further
challenges on the cache efficiency due to the larger number of items
to be recommended.

\subsection{Runtime}


For efficient computation, GraphChi first needs to preprocess data
into shards by the proposed parallel sliding
windows~\cite{Kyrola2012}. Once the data is partitioned, it can
process the graphs efficiently.
For comparison, we partition both rating matrix of Netflix prize data 
and Yahoo Music data into blocks with each block contains all the ratings 
come from around 1000 users. Each time our algorithms read one block from disk. 
For Graphchi and Graphlab Create we use the default partition strategy.
We run all the experiments on an Amazon c3.8xlarge instance running
Ubuntu 14.04 with 32 CPUs and 60GB RAM.

For SGD-based methods We initialize the initial learning rate and regularizer $\eta_0=0.02, \lambda=5\cdot 10^{-3}$ for Netflix data, and $\eta_0=\{0.1,0.08,0.06\}, \lambda=5\cdot 10^{-2}$ for Yahoo Music data. We update learning rate per round as $\eta_t = \eta_{0}/t^{\gamma}$. We also use the same decay rate $\gamma=1$ for both dataset. For our fast SGLD solver, we set $\eta_0=\{2\cdot10^{-10}, 1\cdot 10^{-10}, 9\cdot 10^{-11}\}$ and hyperparameters $\alpha=1.0, \beta=100.0$. And we set decay rate $\gamma=0.6$ for Netflix data and $\gamma=\{0.8,0.9\}$ for Yahoo data. In practice to speed up SGLD's burn-in procedure, we multiply learning rate by a temperature parameter $\zeta$~\cite{Chen2014} in the Gaussian noise $\cN(0, \zeta \cdot \eta_t)$ with $\sqrt{\zeta \cdot \eta_t}\gg \eta_t$. We set $\zeta = \{0.07, 0.9\}$ for Netflix data and Yahoo data.

Since it is nontrivial to observe the test RMSE error in each epoch
when using Graphlab Create, we only report the timing of Graphlab
Create and all other methods in
Figure~\ref{fig:timing_dimension}. Note that we were unable to obtain
performance results from BidMach for the Yahoo dataset, since Scala
encountered memory management issues. However, we have no reason to
believe that the results would be in any way more favorable to BidMach
than the findings on the Netflix dataset. For reproducibility the
results were carried out on an AWS {\tt g2.8xlarge} instance. 

To illustrate the convergence over time. We run all the methods in a fixed number of epochs. 
That is 15 epochs and 30 epochs respectively because we observe that our SGD solver can reach the convergence at that time. 
Figure~\ref{fig:timing} shows our timing results along with
convergence while we vary dimensions of the models. 

Both of our solvers, i.e.\ C-SGD and Fast SGLD benefit from our caching algorithm.
C-SGD is around 2 to 3 times faster than GraphChi and Graphlab while
simultaneously outperforming the accuracy of GraphChi. The primary
reason for the discrepancy in performance can be found in the order in
which GraphChi processes data: it partitions data (bother users and
items) into random subsets and then optimizes only over one such
subblock at a time. While the latter is fast, it negatively affects
convergence, as can be seen in Figure~\ref{fig:timing}. 

Note that the algorithm required for Fast SGLD is rather more complex,
since it performs sampling from the Bayesian posterior. Consequently,
it is slower than plain SGD. Nonetheless, its speed is comparable to
GraphChi in terms of throughput
(despite the latter solving a much simpler problem). 
One problem of SGLD is that
the more complex the models are, the worse its convergence becomes, due to
the fact that we are sampling from a large state space. This is possibly due to the
slow mixing of SGLD, which is a known problem of SGLD
\citep{ahn2012bayesian}. Improving the mixing rate by considering a
more advanced stochastic differential equation based sampler,
e.g.\ \citep{Chen2014,Ding2014}, while keeping the cache efficiency
during the updates will be important future work. To our best knowledge
we are the first to report the convergence results of SGLD at this
scale.




\begin{figure}[tbhf]
\centering
\includegraphics[width=\columnwidth]{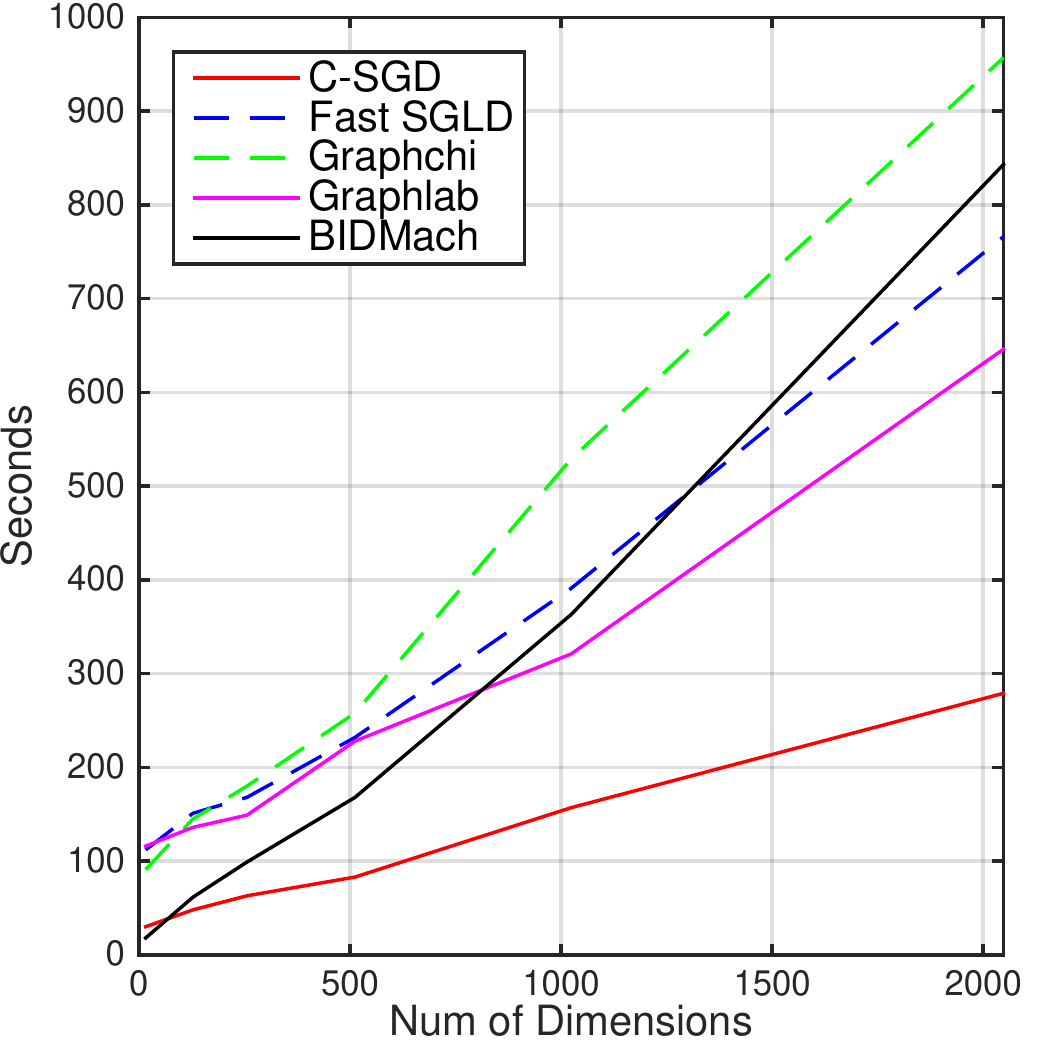}
\includegraphics[width=\columnwidth]{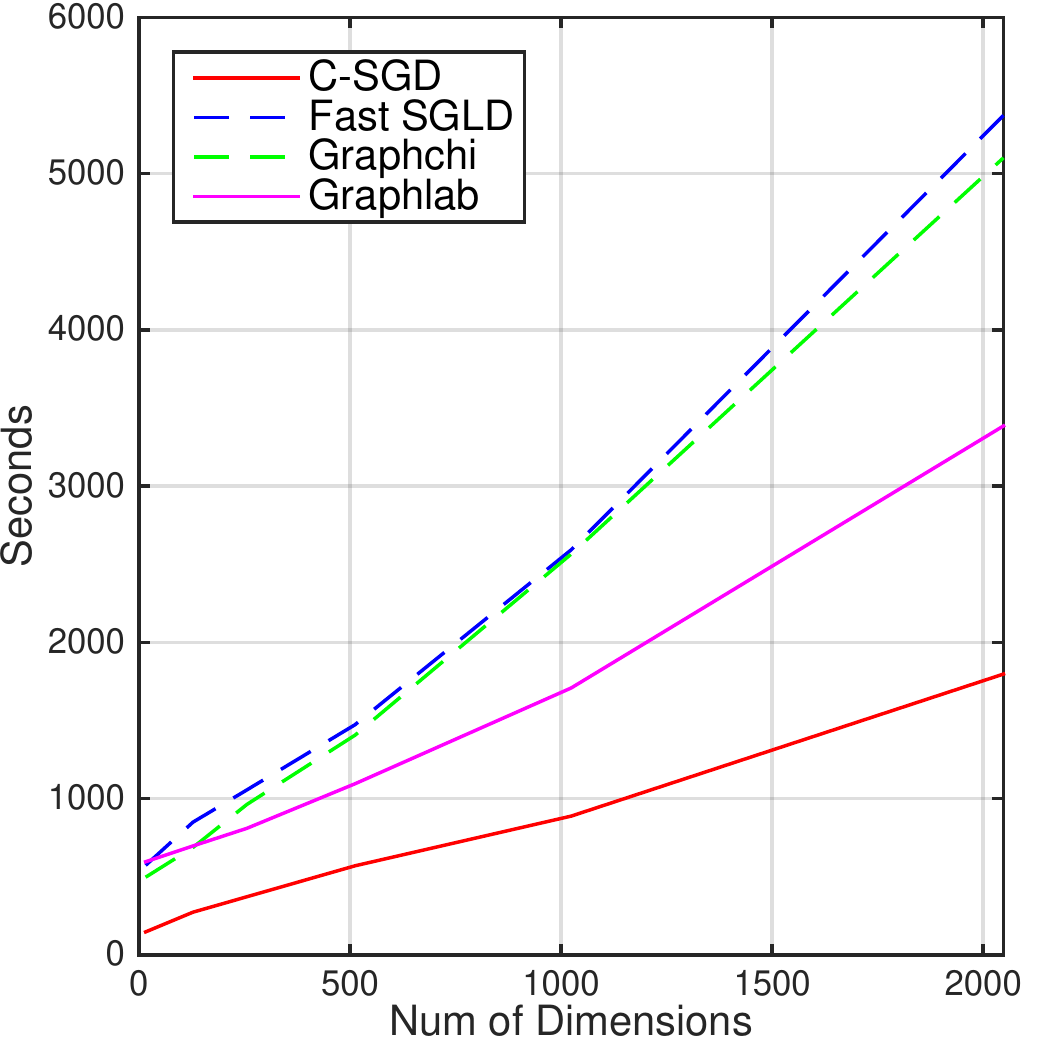}
\caption{ Timing comparisons on Netflix (top, 15 epochs) and Yahoo (bottom, 30 epochs). \label{fig:timing_dimension}}
\end{figure}


\bigskip\smallskip
\subsection{Convergence}\label{sec:exp_conv}
As described above, the convergence of SGLD and SGD
based methods are quite different. We illustrate the convergence on a
small dimension in Figure~\ref{fig:conv}. Basically the C-SGD can find
a MAP estimate using several rounds and then begin overfitting. While
SGLD first needs to burn-in and then start sampling procedure. Note
that SGLD can converge very fast in this case. But for higher
dimensions, SGLD is slower to converge. Careful tuning of the
learning rate is critical here. 

We also investigated the accuracy of the model as a function of the
size of the Gaussian lookup table. That is, we checked whether
replacing explicit access to samples from the Normal distribution by
looking up a consecutive number of precomputed parameters from memory
is valid. As can be seen in
Figure~\ref{fig:scale}, for all but the smallest sets, this
suffices. That is, already once we have more than 10,000 numbers, we
no longer need a Gaussian random number generator and the results
obtained are essentially indistinguishable (obviously for large
numbers of dimensions somewhat more terms are needed). 

\begin{figure}[tbhf]
\centering
\includegraphics[width=\columnwidth]{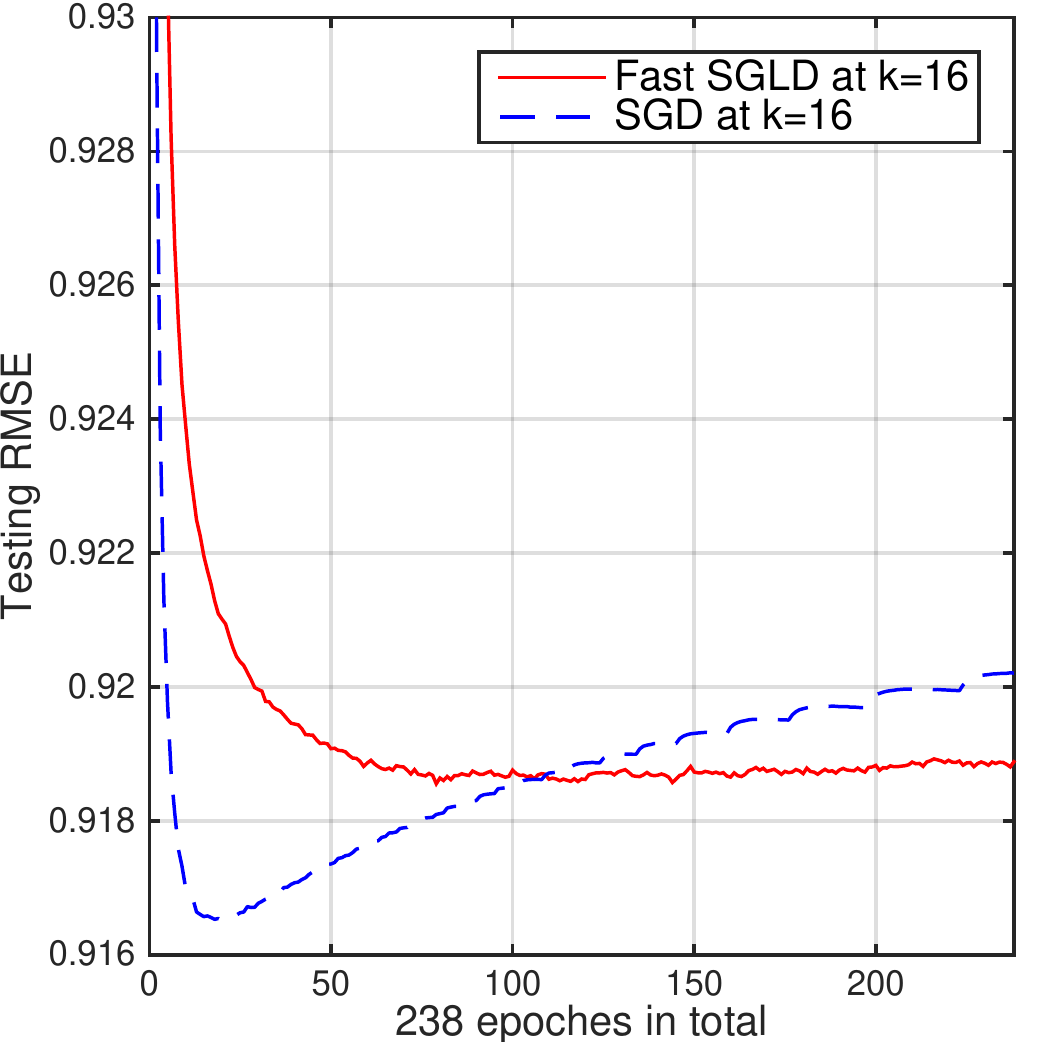}
\caption{ Convergence of SGLD for 16-dimensional models on Netflix. It
  is clear that SGLD does not overfit (although this is not a
  substantial issue for 16 dimensions).
	\label{fig:conv}}
\end{figure}

\subsection{Cache-efficient Design}

\begin{table}
\begin{tabular}{|l|rr|rr|}
\hline
  K & \multicolumn{2}{|c|}{SC-SGD}&
  \multicolumn{2}{|c|}{GraphChi} \\
  & L1 Cache & L3 Cache & L1 Cache & L3 Cache \\ \hline
  16  & 2.84\% & 0.43\% & 12.77\% & 2.21\% \\
  256 & 2.85\% & 0.50\% & 12.89\% & 2.34\% \\
  2048 & 3.3\% & 1.7\% & 15\% & 9.8\% \\ \hline
\end{tabular}
\caption{\label{tb:cache} Cache miss rates in C-SGD and GraphChi. The
  results were obtained using Cachegrind. The cache miss
  rate in GraphChi is considerably higher, which explains to some
  extent the speed difference.}
\end{table}

We show the cache efficiency of C-SGD and Graphchi in this section. Our 
data access pattern can accelerate the hardward cache prefetching. In the meanwhile
we also use software prefetching strategies to prefetch movie factors in advance. But software prefetching 
is usually dangerous in practice while implementing in practice because we need to know the prefetching stride in advance. That is
when to prefetch those movie factors. In our experiments we set prefetching stride to 2 empirically. 
We set the experiments as follows. In each
gradient update step given $r_{ij}$, once the parameters e.g. $u_i$ and
$v_j$ in \eq{eq:sgd} been read they will stay in cache for a while
until they be flushed away by new parameters. What we really care
about in this section is if the first time each parameter be read by
CPU is already staying in cache or not. If it is not in cache then
there will be a cache miss and will push CPU to idle. After that the
succeeding updates (the specific updates depend on the algorithms
e.g. SGD or SGLD) for $u_i$ and $v_j$ will run on cache level. 

We use Cachegrind~\cite{laptevanalysis} as a cache profiler and
analyze cache miss for this purpose. The result in
Table~\ref{tb:cache} shows that our algorithm is quite cache friendly when
compared with GraphChi on all dimensions. This is likely due to the
way GraphChi ingests data: it traverses one data and item block at a time. As a
result it has a less efficient portfolio of access frequency and it
needs to fetch data from memory more frequently. We believe this to be
both the root cause of decreased computational efficiency and slower
convergence in the code.

\subsection{Privacy  and Accuracy}


We now investigate the influence of privacy loss on accuracy. As
discussed previously, a small rescaling factor $B$ can help us
to get a nice bound on the loss function. For private collaborative
filtering purposes, we first trim the training data by setting each user's maximum allowable number of ratings $\tau=100$ and $\tau=200$
for the Netflix competition dataset and Yahoo Music data respectively. We set $B = \tau(5-1+\kappa)^2$ and weight of each user as $w_i = \min(\rho, \frac{B}{m_i(5-1+\kappa)^2})$ where $\kappa$ is set to 1. According to different trimming strength we have $B=2500$ and $B=5000$ for Netflix data and Yahoo data respectively. Note that a maximum
allowable rating from $100$ to $200$ is quite reasonable, since in practice most
users rate quite a bit fewer than $200$ movies (due to the power law
nature of the rating distribution). Moreover, for users who have more
than $200$ ratings, we actually can get a quite a good approximation
of their profiles by only using a reasonable size of random samples of these ratings. As such we get a dataset with 33M ratings for Netflix and 100M ratings for Yahoo Music data. We study the prediction accuracy, i.e. the utility of our private method by varying the differential privacy budget $\epsilon$ for fixed model dimensionality $K=16$. 

The parameters of the experiment are set as follows. For Netflix data, we set $\eta_0=\{6\cdot 10^{-10},3\cdot 10^{-9}, 3.2\cdot 10^{-8}\}$, $\gamma=0.6$, $\zeta = \{7\cdot 10^{-2}, 2.5\cdot 10^{-3}\}$, $\rho = \{1,10\}$. For Yahoo data, we set $\eta_0=\{1.5 \cdot 10^{-10}, 1.5\cdot 10^{-9}, 5\cdot 10^{-10}, 2\cdot 10^{-9}\}$, and $\gamma=\{0.8,0.9\}$, $\zeta=\{0.05, 0.01,0.005\}$, $\rho=\{1,30\}$. In addition, because we are sampling P$(U,V|rest)$ we fix regularizer parameters $\Lambda_u, \Lambda_v$ which are estimated by a non-private SGLD in this section.




While we are sampling $(U,V)$ jointly, we essentially only need to release $V$. 
Users can then apply their own data to get the full model
and have a local recommender system:
\begin{equation}
  \label{eq:semiprivate}
u_i \approx \rbr{\lambda \one + \sum_{j|(i,j) \in \mathcal{S}} v_j v_j^\top}^{-1} \sum_{j} v_j r_{ij}
\end{equation}
The local predictions, i.e.\ in our context the utility of
differentially private matrix factorization method, along the
different privacy loss $\epsilon$ are shown in
Figure~\ref{fig:RMSE_vs_privacy}.

More specifically, the model \eq{eq:semiprivate} is a \emph{two-stage}
procedure which first takes the differentially private \emph{item
  vectors} and then use the latter to obtain locally non-private user
parameter estimates. This is perfectly admissible since users have no
expectation of privacy with regard to their own ratings.

\begin{figure}
\centering
\includegraphics[width=\columnwidth]{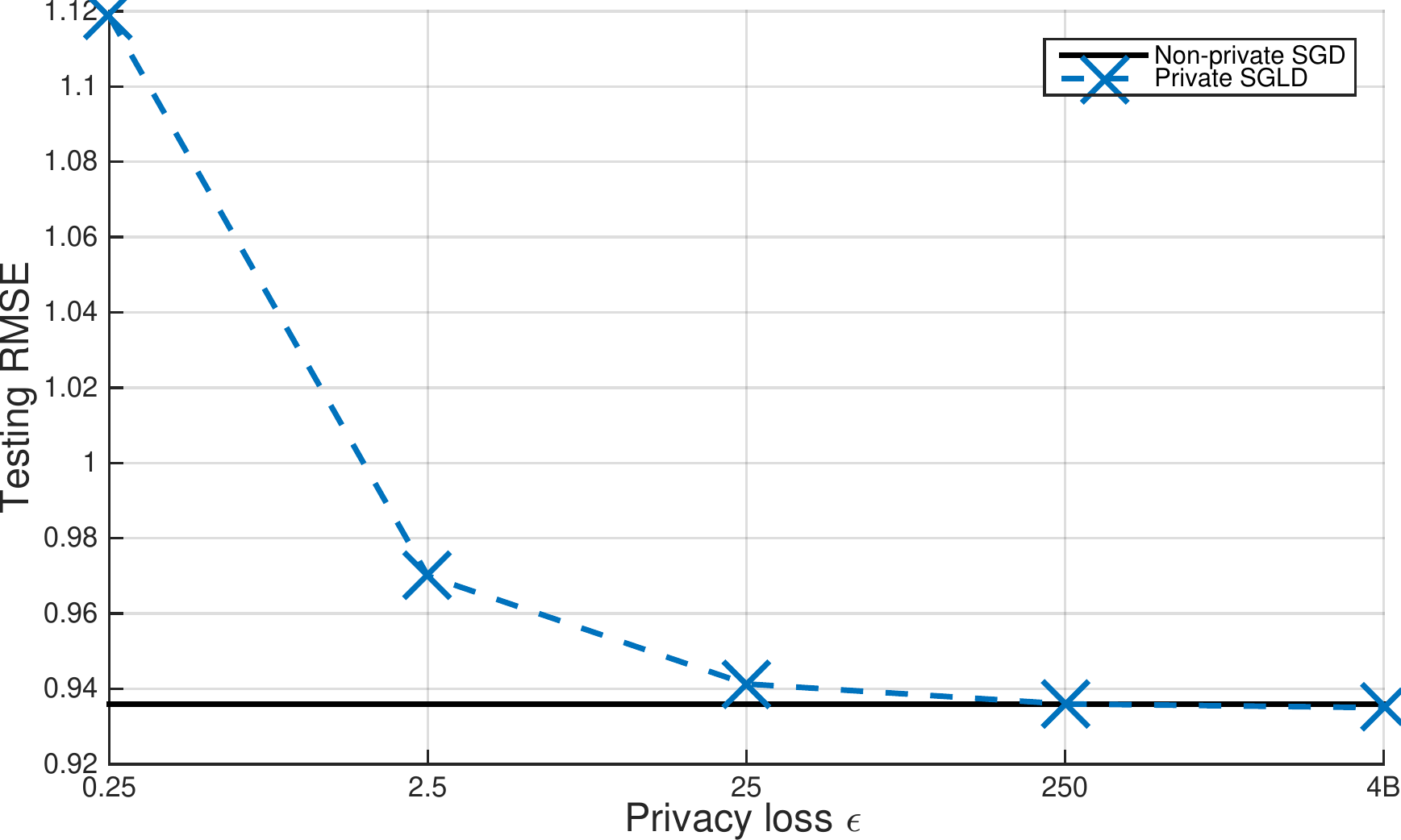}
\includegraphics[width=\columnwidth]{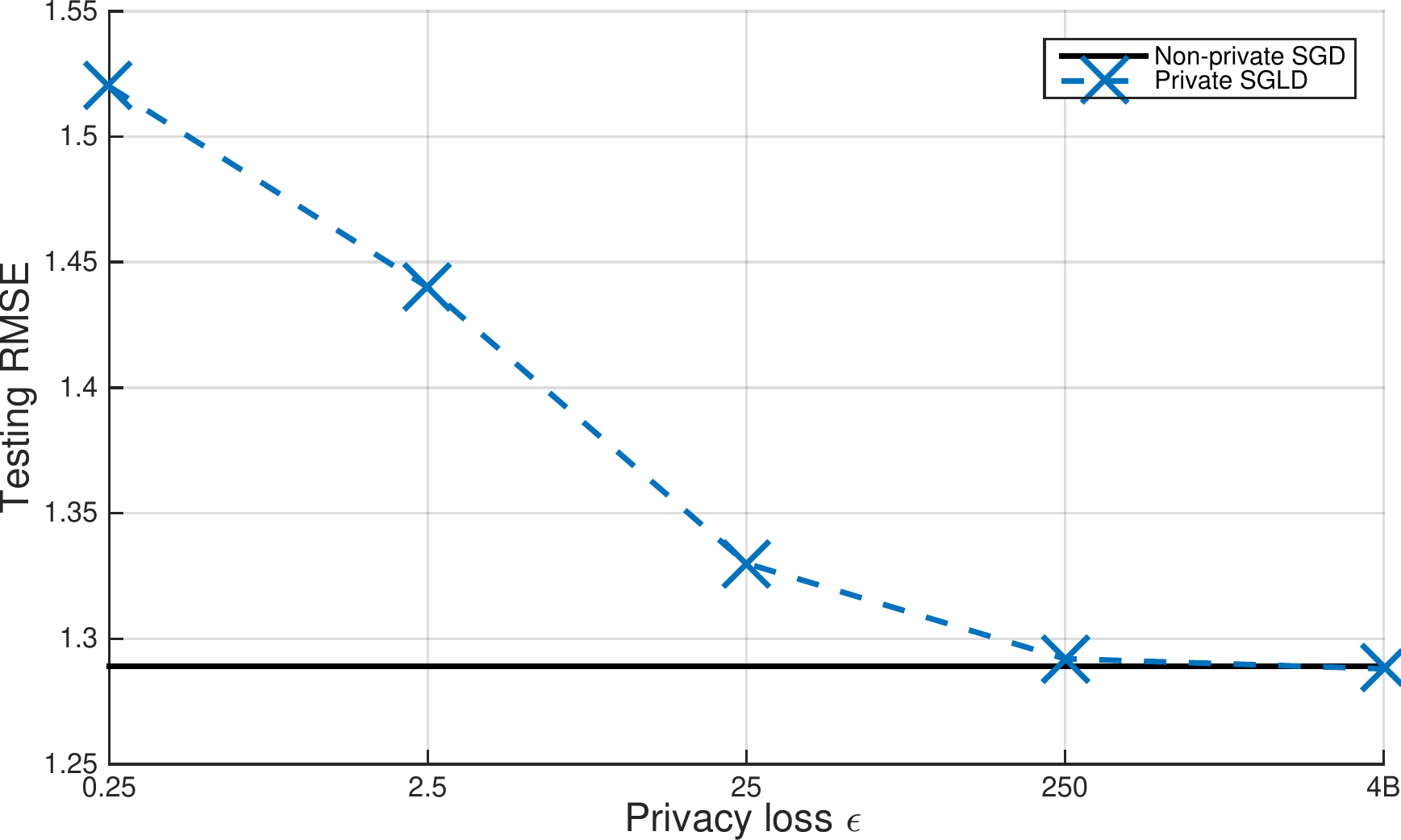}
\caption{Test RMSE vs.\ privacy loss $\epsilon$ on Netflix (top) and
  Yahoo (bottom). A modest decrease in accuracy
  affords a useful gain in privacy. \label{fig:RMSE_vs_privacy}} 
\end{figure}




\subsection{Rating privacy, user privacy and average personalized privacy}
Interpreting the privacy guarantees can be subtle. A privacy loss of $\epsilon=250$ as in Figure~\ref{fig:RMSE_vs_privacy} may seem completely meaningless by Definition~\ref{def:diffp} and the corresponding results in \citet{Mcsherry2009} may appear much better.

We first address the comparison to \citet{Mcsherry2009}. It is important to point out that our privacy loss $\epsilon$ is stated in terms of user level privacy while the results in \citet{Mcsherry2009} are stated in terms of rating level privacy, which offers exponentially weaker protection. $\epsilon$-user differential privacy translates into $\epsilon/\tau$-rating differential privacy. Since $\tau=200$ in our case, our results suggest that we almost lose no accuracy at all while preserving rating differential privacy with $\epsilon<1$. This matches (and slightly improves) \citet{Mcsherry2009}'s carefully engineered system.

On the other hand, we note that the plain privacy loss can be a very deceiving measure of its practical level of protection. Definition~\ref{def:diffp} protects privacy of an arbitrary user, who can be a malicious spammer that rates every movie in a completely opposite fashion as what the learned model would predict. This is a truly paranoid requirement, and arguably not the right one, since we probably should not protect these malicious users to begin with. For an average user, the personalized privacy (Definition~\ref{def:DPpersonal}) guarantee can be much stronger, as the posterior distribution concentrates around models that predict reasonably well for such users. As a result, the log-likelihood associated with these users will be bounded by a much smaller number with high probability. In the example shown in Figure~\ref{fig:RMSE_vs_privacy}, a typical user's personal privacy loss is about $\epsilon/25$, which helps to reduce the essential privacy loss to a meaningful range.

\section{Conclusion}

In this paper we described an algorithm for efficient collaborative
filtering that is compatible with differential privacy. In particular,
we showed that it is possible to accomplish all three goals: accuracy,
speed and privacy without any significant sacrifice on either end.

Moreover, we introduced the notion of \emph{personalized} differential
privacy. That is, we defined (and proved) the notion of obtaining
estimates that respect different degrees of privacy, as required by
individual users. We believe that this notion is highly relevant in
today's information economy where the expectation of privacy may be
tempered by, e.g.\ the cost of the service, the quality of the
hardware (cheap netbooks deployed with Windows 8.1 with Bing), and the
extent to which we want to incorporate the opinions of
users.

Our implementation takes advantage of the caching properties of modern
microprocessors. By careful latency hiding we are able to obtain near
peak performance. In particular, our implementation is approximately 3
times as fast as GraphChi, the next-fastest recommender system. In
sum, this is a strong endorsement of Stochastic Gradient Langevin
Dynamics to obtain differentially private estimates in recommender
systems while still preserving good utility.

{\bfseries Acknowledgments:} Parts of this work were supported by a grant of Adobe Research. Z.\ Liu was supported by Creative Program of Ministry of Education (IRT13035); Foundation for Innovative Research Groups of NNSF of China (61221063); NSF of China (91118005, 91218301); Pillar Program of NST (2012BAH16F02). Y.-X.~Wang was supported by NSF Award BCS-0941518 to CMU Statistics and Singapore National Research Foundation under its International Research Centre @ Singapore Funding Initiative and administered by the IDM Programme Office.

\appendix

\begin{proof}[Proof of Theorem~\ref{thm:privacy}]
	The $\epsilon$-DP claim follows by choosing the utility function to be the $-F(U,V)$ and apply the exponential mechanism \citep{mcsherry2007mechanism} which protects $\epsilon$-DP by output $(U,V)$ with probability proportional to $\exp(-\frac{\epsilon}{2\Delta F(U,V)}F(U,V)$
	Where he sensitivity of function $f$ be defined as
	$$
	\Delta f(X) = \sup_{X,X'\in \cX^n:d(X,X')\leq 1}\|f(X) - f(X')\|_2.
	$$
	All we need to do is to work out the sensitivity for $F(U,V)$ here.
	By the constraint in $U,V$ and $1\leq r_{ij}\leq 5$, we know $(r_{ij}-u_i^Tv_j)^2\leq (5+\kappa)^2$.
	Since one user contributes only one row to the data the trimming/reweighting procedure ensures that
	for any $U,V$ and any user,  the sensitivity of $F(U,V)$ obeys
	$$\Delta F(U,V)\leq 2 w_i\min\{m_i,\tau\} (5+\kappa)^2:=2B,$$
	as specified in the algorithm. The $(\epsilon,\delta)-$DP claim is simple (given in Proposition 3 of \citep{Wang2015}) and we omit here.
	
	Lastly, we note that the ``retry if fail'' procedure will always sample from the the correct distribution of $P$ conditioned on $(U,V)$ satisfying our constraint that $u_i^Tv_i$ is bounded, and it does not affect the relative probability ratio of any measurable event in the support of this conditional distribution.
\end{proof}

\begin{proof}[Proof of Theorem~\ref{thm:personal_privacy}]
	For generality, we assume the parameter vector is $\vct \theta$ and all regularizers is capture in prior $p(\theta)$.
	The posterior distribution $p(\vct \theta|\vct x_1,...,\vct x_n) = \frac{\prod_{i=1}^n p(\vct x_i|\vct \theta)p(\vct \theta)}{\int_{\vct \theta} \prod_{i=1}^n p(\vct x_i|\vct \theta)p(\vct \theta) d\vct \theta}$. For any $\vct x_1,...,\vct x_n$, if we add  (removing has the same proof) a particular user $\vct x'$ whose log-likelihood is uniformly bounded by $B'$.
	The probability ratio can be factorized into
	\begin{align*}
	\frac{p(\vct \theta|\vct x_1,...,\vct x_n, \vct x',)}{p(\vct \theta|\vct x_1,...,\vct x_n)} = \underbrace{\frac{p(\vct x'|\vct \theta)\prod_{i=1}^n p(\vct x_i|\vct \theta)p(\vct \theta)}{\prod_{i=1}^n p(\vct x_i|\vct \theta)p(\vct \theta)}}_\text{Factor 1}\\\times \underbrace{\frac{\int_{\vct \theta} \prod_{i=1}^n p(\vct x_i|\vct \theta)p(\vct \theta) d\vct \theta}{ \int_{\vct \theta} p(\vct x'|\vct \theta)\prod_{i=1}^n p(\vct x_i|\vct \theta)p(\vct \theta)d\vct \theta }}_\text{Factor 2}.
	\end{align*}
	
	It follows that
	\begin{align*}
	&\text{Factor 1} = p(\vct x'|\vct \theta) = e^{\log p(\vct x'|\vct \theta)} \leq e^{B'},\\
	&\text{Factor 2} = \frac{\int_{\vct \theta} \prod_{i=1}^n p(\vct x_i|\vct \theta)p(\vct \theta) d\vct \theta}{ \int_{\vct \theta} p(\vct x'|\vct \theta)\prod_{i=1}^n p(\vct x_i|\vct \theta)p(\vct \theta)d\vct \theta }\\
	=& \frac{\int_{\vct \theta} \prod_{i=1}^n p(\vct x_i|\vct \theta)p(\vct \theta) d\vct \theta}{ \int_{\vct \theta} e^{\log p(\vct x'|\vct \theta)}\prod_{i=1}^n p(\vct x_i|\vct \theta)p(\vct \theta)d\vct \theta }\\
	\leq& \frac{\int_{\vct \theta} \prod_{i=1}^n p(\vct x_i|\vct \theta)p(\vct \theta) d\vct \theta}{ \int_{\vct \theta} e^{-B_i}\prod_{i=1}^n p(\vct x_i|\vct \theta)p(\vct \theta)d\vct \theta }\leq e^{B'}.
	\end{align*}
	As a result, the whole thing is bounded by $e^{2B'}$.
	
	In Algorithm~\ref{alg:DPMF}, denote $\vct \theta = (U,V)$. We are sampling from  a distribution proportional to $e^{\frac{\epsilon}{4B}F(\vct \Theta)}$. This is equivalent to taking the above posterior $p$ to have the log-likelihood of User $\vct x'$ bounded by $\frac{\epsilon 2B'}{4B}=\frac{\epsilon B'}{2B}$, therefore the algorithm obeys $\frac{\epsilon B'}{2B}$ personalized differential privacy for user $\vct x'$. Take $B'$ to be any customized subset of $B_1,...,B_n$ adjustied using $w$ we get the expression as claimed.
\end{proof}

\bibliographystyle{abbrvnat}
\setlength{\bibsep}{0pt plus 0.3ex}


\end{document}